\documentclass[smallextended,envcountsect]{svjour3}

\renewcommand{\subclassname}{%
  {\bfseries Mathematics Subject Classification (2020)}\enspace
}

\usepackage{graphicx}
\journalname{JOTA}
\usepackage{amsmath}
\usepackage{hyperref}
\usepackage{algorithm}
\usepackage{algpseudocode}
\usepackage{amssymb}    
\usepackage{xcolor}     

\begin{document}

\title{Smooth Reparameterizations of Functions on Simplicial Product Spaces: Applications to Probabilistic Tensor Decomposition and Functional Data Registration}
\author{
  Shashwat Kumar \and
  Arafat Rahman \and
  Anuj Srivastava \and
  P.-A. Absil
}

\institute{
  Shashwat Kumar \at
  Department of Biomedical Engineering, Johns Hopkins University, USA \\
  \email{skuma118@jh.edu}
  \and
  Arafat Rahman \at
  Department of Systems and Information Engineering, University of Virginia, USA \\
  \email{jgh6ds@virginia.edu}
  \and
  Anuj Srivastava \at
  Department of Applied Mathematics and Statistics, Johns Hopkins University, USA\\
  \email{anuj.srivastava@jhu.edu}
  \and
  P.-A. Absil \at
  ICTEAM Institute, UCLouvain, 1348 Louvain-la-Neuve, Belgium \\
  \email{pa.absil@uclouvain.be }
}

\date{Received: date / Accepted: date}

\maketitle
\begin{abstract}
We consider optimization problems defined on product spaces of simplices. Examples of this class of problems include learning low-rank discrete multivariate probability distributions via simplex constrained tensor decomposition and performing functional data registration under the Square Root Velocity Function (SRVF) representation. In this work, we demonstrate the feasibility of replacing the product simplex with a smooth, elementwise strictly convex reparameterization, resulting in an unconstrained optimization problem on a manifold. We show that performing such a reparameterization results in the second order Karush–Kuhn–Tucker (KKT) points on the smooth manifold being mapped to the weak second order KKT points on the product simplex. This leads to a Riemannian Gradient Descent (RGD) algorithm for solving the reparameterized problem, which outperforms Projected Gradient Descent (PGD), and provides a more faithful representation of the original function shapes while performing curve registration.
\end{abstract}
\keywords{Riemannian Optimization \and Tensor Decomposition \and Functional Data Registration \and Hadamard Parametrization}
\subclass{65K05 \and 53B21 \and 70G45}

\section{Introduction}
Several optimization problems involve a twice differentiable function \( f \) being optimized over a feasible set that is a product space of simplices and Euclidean components:
\begingroup\renewcommand{\theHequation}{Psimplex}%
\begin{equation}
\tag{$P_{\text{simplex}}$}
\begin{split}
&\min_{v \in \mathcal{F}_{\mathrm{simplex}}} f(v), \\
&\mathcal{F}_{\mathrm{simplex}} = \Delta^{n_1} \times \cdots \times \Delta^{n_{L}} \times 
\mathbb{R}^{n_{L+1}} \times \cdots \times \mathbb{R}^{n_{L+Q}}.
\end{split}
\label{eq:formulation_simplex}
\end{equation}\endgroup

Here, $\Delta^n = \{ x \in \mathbb{R}^n \mid x \geq 0, \textbf{1}^T x=1\}$ denotes the standard $(n-1)$-simplex embedded in $\mathbb{R}^{n}$. Such problems often naturally arise in diverse domains, such as (i) \emph{simplex-constrained matrix factorization}, where the elements of the simplices may for example represent unknown mixture proportions of pure minerals to be estimated from hyperspectral images \cite{abdolali2021simplex,kizel2017stepwise,ma2013signal,wu2017stochastic}; (ii) \emph{probabilistic tensor decomposition}, where simplices are used to learn probability distributions \cite{kargas2018tensors}, perform feature selection \cite{amiridi2021information}, or minimize the smallest element of a tensor \cite{sidiropoulos2023minimizing}; and (iii) \emph{shape analysis}, where the Square Root Velocity Function (SRVF) representation leverages simplicial parameterizations to temporally align functional data \cite{srivastava2016functional,tucker2013generative}. 

A common technique for solving optimization problems with such constraints involves strategies like Projected Gradient Descent (PGD) which involve moving along the direction of the gradient followed by projection on $\mathcal{F}_{\mathrm{simplex}}$ \cite{kizel2017stepwise}. As an alternative for solving \eqref{eq:formulation_simplex}, several papers have focused on solving by Riemannian optimization algorithms a smooth reparameterized version of the problem:
\begingroup\renewcommand{\theHequation}{Psmooth}%
\begin{equation}
\tag{$P_{\text{smooth}}$}
\label{eq:smooth}
\min_{\theta \in \mathcal{F}_{\mathrm{smooth}}} f(\varphi(\theta)),
\end{equation}\endgroup
where $\mathcal{F}_{\mathrm{smooth}}$ is a manifold and $\varphi: \mathcal{F}_{\mathrm{smooth}} \to \mathcal{F}_{\mathrm{simplex}}$ is surjective.

For instance, the Hadamard reparameterization $\varphi:\mathbb{S}^{n-1}\to\Delta^{n}$, defined by $\varphi(\theta)=\theta\odot\theta$, where $\mathbb{S}^{n-1}:=\left\{\theta\in\mathbb{R}^{n}:\|\theta\|_{2}=1\right\}$ denotes the unit sphere in $\mathbb{R}^{n}$, has been used in linear programming \cite{faybusovich1991hamiltonian}, sparse recovery \cite{vaskevicius2019implicit}, and high-dimensional linear regression \cite{zhao2019implicit}. Products of Hadamard maps $\bigl(\varphi: \mathbb{S}^{n-1} \times \dots \times \mathbb{S}^{n-1} \to \Delta^{n} \times \dots \times \Delta^{n},\ \varphi(\theta) = \bigl(\theta_1 \odot \theta_1, \dots, \theta_L \odot \theta_L\bigr)\bigr)$ have been used to solve Simplex Constrained Matrix Factorization in \cite{esposito2025riemannian} and \cite{guo2021sparse}. Other works also add Euclidean components for use in  shape registration \cite{tucker2013generative} $\bigl(\varphi: \mathbb{S}^{n-1} \times \dots \times \mathbb{S}^{n-1} \times \mathbb{R}^n \to \Delta ^{n} \times \dots \times \Delta^n \times \mathbb{R}^n,\ \varphi(\theta) = \bigl(\theta_1 \odot \theta_1,\dots,\theta_L\odot \theta_L,\theta_{L+1}\bigr)\bigr)$. Non-Hadamard parameterizations $\bigl(\varphi: \mathbb{R}^n \times \dots \times \mathbb{R}^n \to \Delta^n \times \dots \times \Delta^n,\ \varphi(\theta) = \bigl(\frac{e^{\theta_1}}{\mathbf{1}^T e^{\theta_1}}, \dots, \frac{e^{\theta_L}}{\mathbf{1}^T e^{\theta_L}}\bigr)\bigr)$ have also been used in probabilistic tensor factorizations \cite{sidiropoulos2023minimizing}. These algorithms can solve \eqref{eq:smooth} by moving along the manifold $\mathcal{F}_{\mathrm{smooth}}$ without requiring projection back to $\mathcal{F}_{\mathrm{simplex}}$.

A recent body of work has studied the critical points of \eqref{eq:formulation_simplex} and \eqref{eq:smooth} under the Hadamard map, $\varphi: \mathbb{S}^{n-1} \to \Delta^{n},\ \varphi(\theta) = \theta \odot \theta$, showing that second order KKT points for \eqref{eq:smooth} imply first order \cite{levin2022effect} and (weak) second-order criticality \cite{li2021simplex} for \eqref{eq:formulation_simplex}, respectively. However, several of the applications we described above involve more general feasible sets and transformations, and the existing results do not directly extend to these settings. Furthermore, while the Hadamard map has recieved a lot of attention, it is not known if there are other more general classes of transformations which preserve KKT points.

\paragraph{\textbf{Contributions:}}
\begin{itemize}
    \item We generalize the results of Li et al.~\cite{li2021simplex} from Hadamard parameterization to a more general class of smooth transformations $\psi(\theta)$, with the following properties: 
\begin{equation}
    \begin{split}
        \psi: \mathbb{R} \rightarrow \mathbb{R}_{\geq 0}, \\
        \psi(0) = 0, \quad \psi'' > 0.
    \end{split}
    \label{eq:psi_conditions}
\end{equation}
    
    Apart from the Hadamard map, this includes  polynomials with even degree, positive coefficients, and no constant term, e.g., $\psi(\theta) = a \theta^4 + b \theta^2, \ a,b > 0$, and other functions such as $\psi(\theta) = \cosh\theta - 1$. 
    
    \item We also generalize their results from $\mathcal{F}_{\text{simplex}}=\Delta^n$ to $\mathcal{F}_{\mathrm{simplex}} = \Delta^{n_1} \times \cdots \times \Delta^{n_L} \times \mathbb{R}^{n_{L+1}} \times \cdots \times \mathbb{R}^{n_{L+Q}}$, allowing this result to be applied to problems like matrix factorization \cite{esposito2025riemannian}, tensor factorization \cite{sidiropoulos2023minimizing}, and functional data registration \cite{tucker2013generative}. This makes it possible to tackle the problems by Riemannian optimization algorithms~\cite{AMS2008,boumal2023intromanifolds}.

    \item We distinguish a weak and a strong version of the second-order KKT conditions for~\eqref{eq:formulation_simplex}. We show that in Section~\ref{sec:main_results} that the weak---but not the strong---second-order KKT points of~\eqref{eq:formulation_simplex} correspond to the (weak, equivalently strong) second-order KKT points of the parameterized formulation.

    \item We provide numerical results on two problems in particular: a) simplex constrained tensor factorization and b) shape registration. We introduce a Riemannian Gradient Descent (RGD) algorithm which outperforms PGD for simplex constrained tensor factorization problems and achieves smoother and more accurate registration for the shape registration problem.
\end{itemize}

\section{Reparameterization of \eqref{eq:formulation_simplex}}
We reparameterize \eqref{eq:formulation_simplex} under transformations of the form $v = \varphi(\theta) = [\psi(\theta_1),\dots,\psi(\theta_L),\theta_{L+1},\dots, \theta_{L+Q}]$ with $\psi$ as in~\eqref{eq:psi_conditions}. This leads to a reparameterized problem:
\begingroup\renewcommand{\theHequation}{Prodsmooth}%
\begin{equation}
\tag{$P_{\times\text{smooth}}$}
\begin{split}
&\min_{\theta \in \mathcal{F}_{\text{smooth}}} f(\varphi(\theta)), \\
&\mathcal{F}_{\text{smooth}} = \{ \theta \in \mathbb{R}^{n_1} \times \dots \times \mathbb{R}^{n_{L+Q}}
\mid \mathbf{1}^T \psi(\theta_l) = 1, \quad l = 1, \dots, L\}.
\end{split}
\label{eq:formulation_smooth}
\end{equation}\endgroup
Here, function $\psi$ applied to a vector is meant elementwise: $(\psi(\theta_l))^i = \psi(\theta_l^i)$. As for $\psi^{\prime}$ and $\psi^{\prime\prime}$, they are the elementwise first and second-order derivative, respectively. Fig.~\ref{fig:reparam} shows an example of this type of reparameterization.
\begin{figure}[!t]
    \centering
    \includegraphics[width=0.99\linewidth]{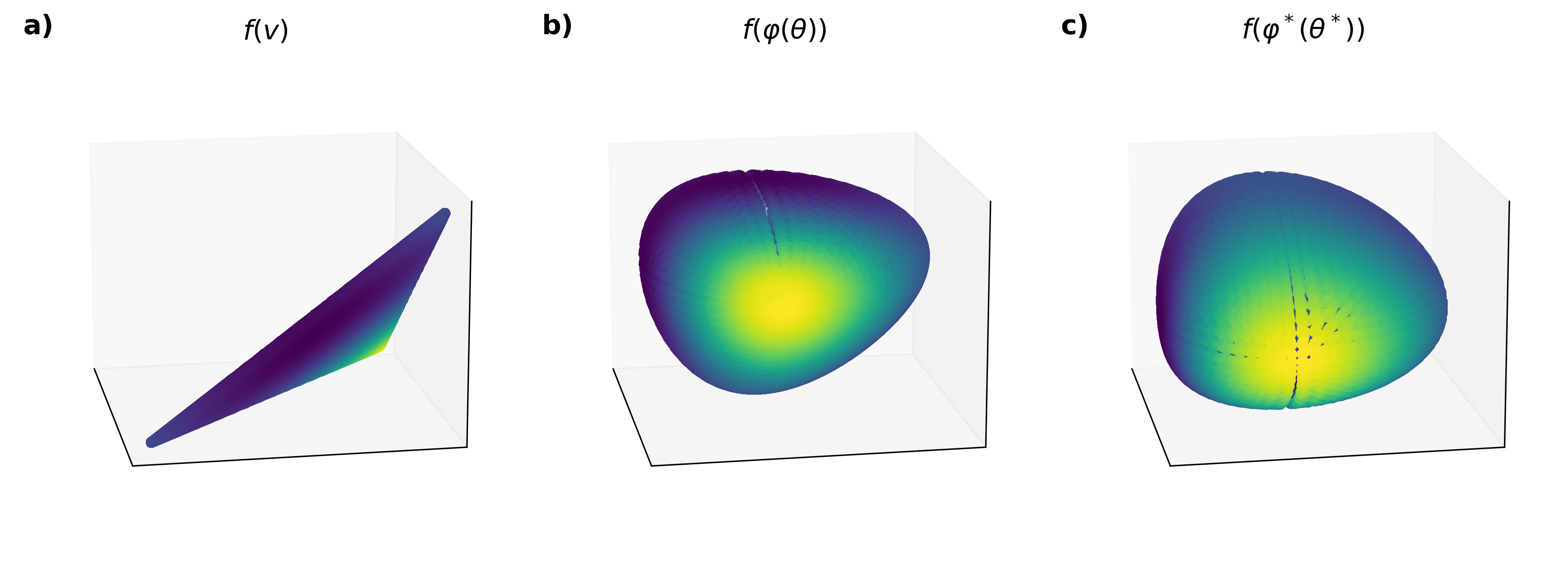}
    \caption{a) Example of a function $f(v) = v^T A v$ on $\mathcal{F}_{\mathrm{simplex}}$. b) Reparameterizing $f(v)$  on a smooth manifold $\mathcal{F}_{\mathrm{smooth}}$ as $f(\varphi(\theta))$, and c) its Legendre transform $f(\varphi^*(\theta^*))$. Darker colors correspond to lower values of $f$. At the lower right corner of the simplex, all the equations in the KKT conditions~\eqref{eq:kkt_simplex} can be satisfied except $\beta_l \geq 0$, hence the lower right corner is not a first-order KKT point; its counterpart in $\mathcal{F}_{\mathrm{smooth}}$ is the point with the lightest color which, being a maximizer subject to a submanifold constraint, is a KKT point.}
    \label{fig:reparam}
\end{figure}
Two important properties of $\psi$ immediately stand out:
\begin{remark}
\label{rem:psi_derivative}
$\psi'(\theta) = 0$ if and only if $\theta = 0$. Indeed, $\psi$ is smooth and $0$ is a minimizer, hence $\psi'(0) = 0$; and if $\theta \neq 0$, then the mean value theorem applied to $\psi'$ yields $\psi'(\theta) = \psi''(\tilde\theta)\theta$, which is nonzero since $\psi''>0$.
\end{remark}
\begin{remark}
\label{rem:zero_root}
$\psi(\theta) = 0$ if and only if $\theta = 0$. Indeed, the condition is sufficient by definition of $\psi$; for the converse, observe that  Taylor's theorem with the remainder in Lagrange form yields $\psi(\theta) = \frac12 \psi''(\tilde\theta) \theta^2 > 0$ for all $\theta \neq 0$.
\end{remark}
\begin{remark}
The range of $\psi$ is $\mathbb{R}_{\geq 0}$. Indeed, since $\psi''>0$,
the function $\psi$ is convex and nonconstant, and therefore it cannot be
bounded from above, as every convex function on $\mathbb{R}$ that is bounded
from above is constant~\cite[\S~1.5, Exercise~2]{niculescu2006convex}.
Since $\psi(0)=0$, $\psi\geq 0$, and $\psi$ is continuous, the intermediate
value theorem yields $\operatorname{Im}(\psi)=\mathbb{R}_{\geq 0}$.
\end{remark}
\begin{proposition}
The feasible set \( \mathcal{F}_{\mathrm{smooth}} \) is a Riemannian manifold.
\end{proposition}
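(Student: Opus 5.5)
The plan is to realize $\mathcal{F}_{\mathrm{smooth}}$ as the zero set of a smooth map and apply the regular level set theorem; the induced Euclidean metric then makes it a Riemannian submanifold. Define
\[
h:\mathbb{R}^{n_1}\times\cdots\times\mathbb{R}^{n_{L+Q}}\to\mathbb{R}^{L},\qquad h(\theta)=\bigl(\mathbf{1}^T\psi(\theta_1)-1,\dots,\mathbf{1}^T\psi(\theta_L)-1\bigr),
\]
so that $\mathcal{F}_{\mathrm{smooth}}=h^{-1}(0)$. Since $\psi$ is smooth, $h$ is smooth. It then suffices to show that $0$ is a regular value, i.e.\ that $Dh(\theta)$ is surjective at every $\theta\in h^{-1}(0)$. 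The theorem would then give that $\mathcal{F}_{\mathrm{smooth}}$ is an embedded submanifold of dimension $\sum_{l}n_l - L$, and restricting the Euclidean inner product to its tangent spaces makes it a Riemannian manifold.

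The key step is the rank computation. The $l$-th component of $h$ depends only on the block $\theta_l$, and its gradient there is $\psi'(\theta_l)$ (elementwise). Hence $Dh(\theta)$ is block-diagonal in the first $L$ blocks, with row $l$ equal to $\psi'(\theta_l)^T$ placed in block $l$, and zeros in the Euclidean blocks. These $L$ rows have disjoint supports, so they are linearly independent as soon as each $\psi'(\theta_l)\neq 0$.

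To verify $\psi'(\theta_l)\neq 0$, I would argue as follows. If $\theta\in\mathcal{F}_{\mathrm{smooth}}$, then $\mathbf{1}^T\psi(\theta_l)=1\neq 0$, so some entry satisfies $\theta_l^i\neq 0$ by Remark~\ref{rem:zero_root}. Remark~\ref{rem:psi_derivative} then gives $\psi'(\theta_l^i)\neq 0$, so $Dh(\theta)$ has full rank $L$.

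Nonemptiness is a minor additional point. Because $\operatorname{Im}\psi=\mathbb{R}_{\geq 0}$, some $t$ satisfies $\psi(t)=1$, and setting $\theta_l=(t,0,\dots,0)$ gives a feasible point. Alternatively, one could note that $\mathcal{F}_{\mathrm{smooth}}$ is a product of the $L$ hypersurfaces $\{\mathbf{1}^T\psi(x)=1\}\subset\mathbb{R}^{n_l}$ with Euclidean factors, and apply the level set argument blockwise. I expect no real obstacle here; the only delicate point is the full-rank step, and that is settled by the two remarks.
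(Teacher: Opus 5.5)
Your proof is correct and matches the paper's: both apply the regular level set (submersion) theorem and reduce everything to showing $\psi'(\theta_l)\neq 0$ on the constraint set, using the same two remarks. The only difference is cosmetic: the paper works blockwise and takes a Cartesian product of embedded hypersurfaces with the Euclidean factors (the alternative you mention at the end), whereas your main route uses a single map into $\mathbb{R}^{L}$ with a block-diagonal Jacobian.
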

\begin{proof}
Let us show that $\mathcal{F}_{\mathrm{smooth}}$ is a Cartesian product of embedded submanifolds. In view of~\cite[Proposition 3.3.3]{AMS2008}, it suffices to show that the differential of $c: \mathbb{R}^{n_l} \to \mathbb{R}: \theta_l \mapsto \mathbf{1}^T \psi(\theta_l)$ is surjective, namely nonzero, at all $\theta_l$ that satisfy the simplex constraint $c(\theta_l)=1$. The Jacobian of $c$ is the row vector $(\psi'(\theta_l))^T$. 
In view of the above remarks, the simplex constraint $c(\theta_l) = 1$ implies that at least one component of vector $\theta_l$ is nonzero, which ensures that $(\psi'(\theta_l))^T$ is nonzero.
\qed \end{proof}
We now study relationships between critical points of~\eqref{eq:formulation_simplex} and~\eqref{eq:formulation_smooth}.
\section{KKT Analysis}
Constructing the Lagrangians of~\eqref{eq:formulation_simplex} and~\eqref{eq:formulation_smooth}, we get, 
\begin{gather}
    L_1(v, \lambda, \beta) = f(v) + \sum_{l=1}^L \lambda_l (\mathbf{1}^T v_l - 1) - \beta_l^T v_l, \label{eq:lagrangian_simplex} \\[6pt]
    L_2(\theta, \lambda^{\mathrm{s}}) = f(\varphi(\theta)) + \sum_{l=1}^L \lambda^{\mathrm{s}} (\mathbf{1}^T \psi(\theta_l) - 1). \label{eq:lagrangian_sphere}
\end{gather}
The first-order KKT conditions of~\eqref{eq:formulation_simplex} take the form
\begin{subequations}\label{eq:kkt_simplex}
\begin{align}
        \nabla_{v_l} f(v) + \lambda_l \mathbf{1} - \beta_l = 0, & \quad l = 1,\dots,L, \label{eq:kkt_simplex_zero_gradient}\\
        \beta_l \odot v_l = 0, & \quad l = 1,\dots,L,  \label{eq:kkt_simplex_hadamard_zero}\\
        v_l \geq 0, \ \beta_l \geq 0, & \quad l = 1,\dots,L, \label{eq:kkt_simplex_nonnegative}\\
        \mathbf{1}^T v_l = 1, & \quad l = 1,\dots,L, \label{eq:kkt_point_simplex}\\
        \nabla_{v_q} f(v) = 0, & \quad q = L+1,\dots,L+Q,\label{eq:kkt_simplex_zero_euclidean_gradient}
\end{align}
\end{subequations}
and those of~\eqref{eq:formulation_smooth} read 
\begin{subequations}\label{eq:kkt_sphere}
    \begin{align}
        \psi^{\prime}(\theta_l) \odot \nabla_{v_l} f(v) + \lambda^{\mathrm{s}} \psi^{\prime}(\theta_l)  = 0, & \quad l = 1,\dots,L, \label{eq:kkt_smooth_zero_gradient}\\
        \mathbf{1}^T \psi(\theta_l) = 1, & \quad l = 1,\dots,L,  \label{eq:kkt_point_smooth}\\
        \nabla_{\theta_q} f(\varphi(\theta)) = 0, & \quad q = L+1,\dots,L+Q.\label{eq:kkt_smooth_zero_euclidean_gradient}
    \end{align}  
\end{subequations}
They reduce to those in Li et al.~\cite[\S 4]{li2021simplex} when $L=1$, $Q=0$, and $\psi(\theta) = \theta \odot \theta$.

Let $v^\star$ and $\theta^\star$ be feasible points. Assume that $(v^\star,\lambda^\star,\beta^\star)$ and
$(\theta^\star,\lambda^{s,\star})$ satisfy the first-order KKT conditions~\eqref{eq:kkt_simplex} and~\eqref{eq:kkt_sphere}, respectively. The weak second-order KKT conditions for~\eqref{eq:formulation_simplex} are given by
\begin{subequations}
    \label{eq:second_order_kkt_simplex}
    \begin{align}
        \Delta v^T \nabla^2_v L_1 \left(v^\star,\lambda^\star,\beta^\star\right) \Delta v \geq 0, & \quad \Delta v = \begin{bmatrix} \hdots &,\Delta {v_l}, & \hdots & ,\Delta {v_q}, & \hdots \end{bmatrix}, \label{eq:kkt_positive_definite_simplex}\\
        \mathbf{1}^T \Delta v_l = 0, &\quad l = 1,\dots,L,\label{eq:kkt_tangent_vector_simplex}\\
        \Delta v_l^i = 0 \quad \text{if } (v_l^\star)^i = 0,  &\quad l = 1,\dots,L,\label{eq:weak_kkt}
        \end{align}
\end{subequations}
and the (weak, equivalently strong) second-order KKT conditions for~\eqref{eq:formulation_smooth} are given by
\begin{subequations}
    \label{eq:second_order_kkt_sphere}
    \begin{align}
        \Delta \theta^T \nabla^2_{\theta} L_2 \left(\theta^\star,\lambda^{s,\star}\right) \Delta \theta \geq 0, &\quad\Delta \theta = \begin{bmatrix} \hdots & ,\Delta {\theta_l}, & \hdots & ,\Delta {\theta_q}, & \hdots \end{bmatrix},\label{eq:kkt_positive_definite_smooth}\\
         \quad \mathbf{1}^T(\psi^{\prime}(\theta_l^\star) \odot \Delta \theta_l) = 0, &\quad l = 1,\dots,L.\label{eq:kkt_tangent_vector_smooth}
    \end{align}
\end{subequations}
\begin{remark}
Equations~\eqref{eq:kkt_sphere} express that the Riemannian gradient is zero for the Riemannian optimization problem~\eqref{eq:formulation_smooth}. Likewise,~\eqref{eq:second_order_kkt_sphere} express that the Riemannian Hessian is positive semidefinite.    
\end{remark}
\begin{remark}
    Condition~\eqref{eq:second_order_kkt_simplex} represents a weak form of the second-order necessary conditions. A \emph{weak second-order KKT point} is a point $v^\star$ for which there exist $\lambda^\star$ and $\beta^\star$ such that~\eqref{eq:kkt_simplex} and~\eqref{eq:second_order_kkt_simplex} are satisfied. Note that this is distinct from the classical second-order necessary conditions for optimality. While Bertsekas et al.~\cite{bertsekas1997nonlinear} and Gill et al.~\cite{gill2019practical} present similar weak conditions, Bonnans et al.~\cite{bonnans2006numerical}, Nocedal et al.~\cite{nocedal2006numerical}, and Arag\'{o}n et al.~\cite{aragon2019nonlinear} present the strong conditions. For the strong version, we replace
    \begin{align*}
        \Delta v_l^i = 0 \quad \text{if } (v_l^\star)^i = 0
    \end{align*}
with
    \begin{align*}
        \Delta v_l^i = 0 \quad &\text{if } (v_l^\star)^i = 0 \text{ and } (\beta_l^\star)^i > 0, \\
        \Delta v_l^i \geq 0 \quad &\text{if } (v_l^\star)^i = 0 \text{ and } (\beta_l^\star)^i = 0.
    \end{align*}
For \eqref{eq:formulation_smooth}, the weak and strong conditions coincide because there are no inequality constraints. The weak/strong terminology follows the one in~\cite[Definition~2.1]{GuoLinYe2013}.
\end{remark}

\section{Main Results}\label{sec:main_results}
The Hessian from the second-order KKT of~\eqref{eq:formulation_simplex} is given by:
\begin{equation}
    \nabla_v^2 L_1(v,\lambda,\beta) = \nabla_v^2 f(v).
\end{equation}
Let \( \theta_a \) and \( \theta_b \) be two blocks of variables with dimensions $n_a$ and $n_b$ respectively. Let \( g(\theta) \) be a differentiable function. Define matrix $\nabla^2_{\theta_a \theta_b} g(\theta) \in \mathbb{R}^{n_a \times n_b}$ by:
\[
\left( \nabla^2_{\theta_a \theta_b} g(\theta) \right)_{ij} 
= \frac{\partial^2 g(\theta)}{\partial \theta_a^i \partial \theta_b^j}.
\] 
Next, consider the Hessian of~\eqref{eq:formulation_smooth}. Here, similar to Li et al.~\cite[Equation~13]{li2021simplex}, the Hessian is the sum of a diagonal term and a generally non-diagonal term involving the Hessian of $f$. However, our formulation includes multiple simplex and Euclidean blocks, leading to mixed second-derivative blocks, so the results of Li et al.~\cite{li2021simplex} do not apply directly.
For $l,m \in \{1, \dots, L\}$ with $l\neq m$ and $q \in \{L+1, \dots, L+Q\}$, we obtain
\begin{equation}
    \begin{split}
    \nabla^2_{\theta_q \theta_q} L_2 &= \nabla^2_{v_q v_q} f, \\
    \nabla^2_{\theta_q \theta_l} L_2 &= \nabla^2_{v_q v_l} f \, \mathrm{diag}(\psi^{\prime}(\theta_l)), \\
    \nabla^2_{\theta_l \theta_q} L_2 &= \mathrm{diag}(\psi^{\prime}(\theta_l)) \nabla^2_{v_l v_q} f, \\
    \nabla^2_{\theta_l \theta_l} L_2 &= \mathrm{diag}(\psi^{\prime\prime}(\theta_l) \odot \nabla_{v_l} f + \lambda_l^{\mathrm{s}} \psi^{\prime\prime}(\theta_l)) + \mathrm{diag}(\psi^{\prime}(\theta_l)) \nabla^2_{v_l v_l} f \, \mathrm{diag}(\psi^{\prime}(\theta_l)), \\
    \nabla^2_{\theta_m \theta_l} L_2 &= \mathrm{diag}(\psi^{\prime}(\theta_m)) \nabla^2_{v_m v_l} f \, \mathrm{diag}(\psi^{\prime}(\theta_l)).
    \label{eq:sphere_hessian_terms}
    \end{split}
\end{equation}
Using~\eqref{eq:sphere_hessian_terms}, we can rewrite $\nabla^2_{\theta} L_2$ as the sum of a diagonal matrix and a change of basis Hessian term
\begin{equation}
\begin{split}
\nabla^2_{\theta} L_2  
\!=\!\!\begin{bmatrix}
\ddots & & \\
& \mathrm{diag}(\psi^{\prime\prime}(\theta_l) \odot \nabla_{v_l} f + \lambda_l^{\mathrm{s}} \psi^{\prime\prime}(\theta_l)) & \\
& \ddots & \\
& 0_{q} & \\
& & \ddots
\end{bmatrix}\!\!\!+\!\mathrm{diag}(\varphi^{\prime}(\theta))\,\nabla^2_{v} f\,\mathrm{diag}(\varphi^{\prime}(\theta)).
\end{split}
\label{eq:hessian_rearrangement}
\end{equation}
Here, $\operatorname{diag}\bigl(\varphi'(\theta)\bigr)
=
\begin{bmatrix}
\ddots &        &        &        &        \\
       & \psi'(\theta_l) &        &        &        \\
       &        & \ddots &        &        \\
       &        &        & I_q    &        \\
       &        &        &        & \ddots
\end{bmatrix}$
is the Jacobian of $\varphi$. 

We now proceed to show the following:
\begin{theorem}
\label{thm:simplex_1_smooth_1}
If $v^\star$ is a first-order KKT point of~\eqref{eq:formulation_simplex} then all $\theta^\star$ satisfying $v^\star = \varphi(\theta^\star)$ are first-order KKT points of~\eqref{eq:formulation_smooth}.
\end{theorem}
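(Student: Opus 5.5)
The plan is to build the multiplier for~\eqref{eq:formulation_smooth} directly from the simplex multipliers. We take $\lambda^{\mathrm{s},\star}_l := \lambda^\star_l$ for each block $l$; since the paper's Lagrangian $L_2$ writes $\lambda^{\mathrm{s}}$ inside a sum over $l$, we read it blockwise as $\lambda^{\mathrm{s}}_l$. Then we check the three groups of conditions in~\eqref{eq:kkt_sphere} one at a time.

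First, feasibility~\eqref{eq:kkt_point_smooth} is immediate. Since $v^\star = \varphi(\theta^\star)$, we have $v_l^\star = \psi(\theta_l^\star)$, and~\eqref{eq:kkt_point_simplex} gives $\mathbf{1}^T\psi(\theta_l^\star) = \mathbf{1}^T v_l^\star = 1$.

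Second, the Euclidean blocks. For $q > L$ the map $\varphi$ acts as the identity, so $\theta_q^\star = v_q^\star$. By the chain rule, $\nabla_{\theta_q} f(\varphi(\theta^\star)) = \nabla_{v_q} f(v^\star)$, which vanishes by~\eqref{eq:kkt_simplex_zero_euclidean_gradient}. This gives~\eqref{eq:kkt_smooth_zero_euclidean_gradient}.

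The main step is~\eqref{eq:kkt_smooth_zero_gradient}, which we verify componentwise. Using~\eqref{eq:kkt_simplex_zero_gradient}, the $i$-th entry of the left-hand side equals
\[
\psi'((\theta_l^\star)^i)\bigl((\nabla_{v_l} f(v^\star))^i + \lambda_l^\star\bigr) = \psi'((\theta_l^\star)^i)\,(\beta_l^\star)^i .
\]
Split on whether $(\theta_l^\star)^i$ is zero.
\begin{itemize}
\item If $(\theta_l^\star)^i = 0$, then $\psi'(0)=0$ by Remark~\ref{rem:psi_derivative}, so the product vanishes.
\item If $(\theta_l^\star)^i \neq 0$, then $(v_l^\star)^i = \psi((\theta_l^\star)^i) > 0$ by Remark~\ref{rem:zero_root}. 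Complementary slackness~\eqref{eq:kkt_simplex_hadamard_zero} then forces $(\beta_l^\star)^i = 0$, so the product vanishes again.
\end{itemize}
Hence $\psi'(\theta_l^\star)\odot\beta_l^\star = 0$, which is exactly~\eqref{eq:kkt_smooth_zero_gradient} with $\lambda^{\mathrm{s},\star}_l = \lambda^\star_l$.

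There is no real obstacle here. The only delicate point is this complementary-slackness case split, which uses both remarks about the zeros of $\psi$ and $\psi'$. Note that the sign condition $\beta_l^\star \ge 0$ is never used. So the argument shows more than stated: any $\theta^\star$ mapping to a point that satisfies~\eqref{eq:kkt_simplex} without the dual sign constraint is still first-order KKT for~\eqref{eq:formulation_smooth}. This matches the caption of Fig.~\ref{fig:reparam}.
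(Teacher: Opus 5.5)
Your proof is correct and follows the paper's argument essentially step for step. Like the paper, you multiply the stationarity condition by $\psi'(\theta_l^\star)$ with $\lambda^{\mathrm{s}}_l=\lambda_l^\star$, eliminate $\psi'(\theta_l^\star)\odot\beta_l^\star$ by a complementary-slackness case split (you split on $(\theta_l^\star)^i$, the paper on $(v_l^\star)^i$, which is equivalent by Remark~\ref{rem:zero_root}), and then check feasibility and the Euclidean blocks; your observation that $\beta_l^\star\ge 0$ is never used is accurate and is the phenomenon described in the caption of Fig.~\ref{fig:reparam}.
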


\begin{proof}
We extend Li et al.~\cite[Equations 16–17]{li2021simplex} to the reparameterization proposed in \eqref{eq:formulation_smooth}. Let us start with the KKT zero-gradient conditions with respect to each vector lying in $\mathcal{F}_{\mathrm{simplex}}$  \eqref{eq:kkt_simplex_zero_gradient},
\begin{equation}
    \nabla_{v_l} f + \lambda_l \mathbf{1} - \beta_l = 0, \quad l = 1,\dots,L.
\end{equation}
Hadamarding with $\psi^{\prime}(\theta_l)$ and setting $\lambda_l=\lambda_l^{\mathrm{s}}$,
\begin{equation}
    \nabla_{v_l} f \odot \psi^{\prime}(\theta_l) + \lambda_l^{\mathrm{s}} \psi^{\prime}(\theta_l) - \beta_l \odot \psi^{\prime}(\theta_l) = 0, \quad l = 1,\dots,L.
\end{equation}
From the first-order KKT condition on the simplex \eqref{eq:kkt_simplex_hadamard_zero}, we have
\begin{equation}
    \beta_l \odot v_l = 0, \quad l = 1,\dots,L.
\end{equation}
This implies the following two cases:
\begin{itemize}
    \item If \( v_l^i = 0 \), then \( \theta_l^i = 0 \), leading to \( \psi'(\theta_l)^i = 0 \) (see Remarks~\ref{rem:psi_derivative}, \ref{rem:zero_root}), and consequently \( \beta_l^i \psi'(\theta_l)^i = 0 \).
    \item If \( v_l^{i} \neq 0 \), then \( \beta_l^i = 0 \), which also ensures \( \beta_l^i \psi'(\theta_l)^i = 0 \).
\end{itemize}
Thus, we conclude that
\begin{equation}
    \beta_l \odot v_l = 0 \implies \beta_l \odot \psi'(\theta_l) = 0, \quad l = 1,\dots,L.
\end{equation}
Plugging this into the equation above satisfies the zero-gradient conditions of~\eqref{eq:kkt_smooth_zero_gradient}:
\begin{equation}
    \nabla_{v_l} f \odot \psi^{\prime}(\theta_l) + \lambda_l^{\mathrm{s}} \psi^{\prime}(\theta_l) = 0, \quad l = 1,\dots,L.
\end{equation}
Next, the simplex constraint~\eqref{eq:kkt_point_smooth} yields
\begin{equation}
    \mathbf{1}^T v_l = 1 \implies \mathbf{1}^T \psi(\theta_l) = 1, \quad l = 1,\dots,L.
\end{equation}
Finally, the Euclidean gradient conditions~\eqref{eq:kkt_smooth_zero_euclidean_gradient} can be rewritten as
\begin{equation}
    \nabla_{v_q} L_1 = 0 \implies \nabla_{v_q} f = 0 \implies \nabla_{\theta_q} f = 0 \implies \nabla_{\theta_q} L_2 = 0.
\end{equation}
Thus, a first-order KKT point on the product simplex implies a first-order KKT point on the smooth manifold.
\qed
\end{proof}
\begin{theorem}
\label{thm:simplex_2_smooth_2}
If $v^\star$ is a weak second-order KKT point of~\eqref{eq:formulation_simplex} then all $\theta^\star$ satisfying $v^\star = \varphi(\theta^\star)$ are second-order KKT points of~\eqref{eq:formulation_smooth}.
\end{theorem}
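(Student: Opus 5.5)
By Theorem~\ref{thm:simplex_1_smooth_1}, every $\theta^\star$ with $\varphi(\theta^\star)=v^\star$ is a first-order KKT point of~\eqref{eq:formulation_smooth} with $\lambda^{s,\star}_l=\lambda^\star_l$. It therefore remains to verify~\eqref{eq:kkt_positive_definite_smooth} for every $\Delta\theta$ satisfying~\eqref{eq:kkt_tangent_vector_smooth}. I will evaluate the quadratic form using the decomposition~\eqref{eq:hessian_rearrangement}, splitting it into a diagonal part and a change-of-basis part, and show that each part is nonnegative.

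\emph{Diagonal part.} For a simplex block $l$, substitute the first-order condition~\eqref{eq:kkt_simplex_zero_gradient}, $\nabla_{v_l}f+\lambda_l\mathbf 1=\beta_l$. The $l$-th diagonal block then becomes $\mathrm{diag}(\psi''(\theta_l^\star)\odot\beta_l^\star)$. Since $\psi''>0$ and $\beta_l^\star\ge 0$, its contribution $\sum_i \psi''((\theta_l^\star)^i)(\beta_l^\star)^i((\Delta\theta_l)^i)^2$ is nonnegative. The Euclidean blocks contribute $0$.

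\emph{Change-of-basis part.} Set $\Delta v=\mathrm{diag}(\varphi'(\theta^\star))\Delta\theta$, that is, $\Delta v_l=\psi'(\theta_l^\star)\odot\Delta\theta_l$ and $\Delta v_q=\Delta\theta_q$. The second term of~\eqref{eq:hessian_rearrangement} then equals $\Delta v^T\nabla^2_v f(v^\star)\Delta v=\Delta v^T\nabla^2_vL_1\Delta v$. I need to check that $\Delta v$ is admissible for~\eqref{eq:second_order_kkt_simplex}.
\begin{itemize}
\item Condition~\eqref{eq:kkt_tangent_vector_simplex} holds because $\mathbf 1^T\Delta v_l=\mathbf 1^T(\psi'(\theta_l^\star)\odot\Delta\theta_l)=0$, which is exactly~\eqref{eq:kkt_tangent_vector_smooth}.
\item Condition~\eqref{eq:weak_kkt} holds because if $(v_l^\star)^i=0$, then $(\theta_l^\star)^i=0$ by Remark~\ref{rem:zero_root}. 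Hence $\psi'((\theta_l^\star)^i)=0$ by Remark~\ref{rem:psi_derivative}, which forces $\Delta v_l^i=0$.
\end{itemize}
The Euclidean components are unconstrained. So $\Delta v$ is admissible, and the weak second-order condition gives nonnegativity of this term.

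Adding the two parts yields~\eqref{eq:kkt_positive_definite_smooth}. The main point of care, and the place the weak (rather than strong) condition is essential, is the check of~\eqref{eq:weak_kkt}. Because $\psi'$ vanishes exactly at zero coordinates, the image directions $\Delta v$ automatically vanish on the active set, so only the weak condition is needed. One must also confirm that the mixed blocks in~\eqref{eq:sphere_hessian_terms} are correctly absorbed into the single congruence $\mathrm{diag}(\varphi')\nabla_v^2 f\,\mathrm{diag}(\varphi')$. This is immediate from the block formulas, and it is where the multi-block setting departs from Li et al.
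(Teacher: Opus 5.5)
Your proof is correct and follows essentially the same route as the paper. You map a tangent vector $\Delta\theta$ to $\Delta v_l=\psi'(\theta_l^\star)\odot\Delta\theta_l$, check via Remarks~\ref{rem:psi_derivative} and~\ref{rem:zero_root} that it is admissible for the weak condition, and use~\eqref{eq:hessian_rearrangement} to split the quadratic form into $\sum_{l,i}\psi''((\theta_l^\star)^i)(\beta_l^\star)^i((\Delta\theta_l)^i)^2\ge 0$ plus $\Delta v^T\nabla_v^2 f\,\Delta v\ge 0$. Your explicit appeal to Theorem~\ref{thm:simplex_1_smooth_1} to obtain first-order criticality with $\lambda^{s,\star}_l=\lambda^\star_l$ is slightly more careful than the paper, which sets the multipliers equal only implicitly.
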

\begin{proof}
We now use our Hessian formula in~\eqref{eq:hessian_rearrangement} to extend Li et al.~\cite[Equations 18-23]{li2021simplex} to show that second-order KKT points on $\mathcal{F}_{\mathrm{simplex}}$ implies second-order KKT points on $\mathcal{F}_{\mathrm{smooth}}$. We start with an arbitrary vector in the tangent space of $\mathcal{F}_{\mathrm{smooth}}$:
\begin{equation}
    \Delta \theta = \begin{bmatrix}
        \hdots \Delta \theta_q & \hdots & \Delta \theta_l & \hdots 
    \end{bmatrix},
    \quad \Delta\theta_l \ \text{s.t.}\ \mathbf{1}^T(\psi'(\theta_l) \odot \Delta\theta_l) = 0, \quad l = 1,\dots,L.
\end{equation}
Since $\psi^{\prime}(\theta_l) \odot \Delta \theta_l = \mathrm{diag}(\psi^{\prime}(\theta_l)) \Delta \theta_l = \Delta v_l$,
the corresponding vector $\Delta v_l$ after applying the Jacobian to $\Delta \theta_l$ satisfies $1^T \Delta v_l = 0$, $l = 1,\dots,L$.  Furthermore, if $v_l^i=0$, then $\theta_l^i=0$ and $\psi'(\theta_l)^i=0$ (Remark \ref{rem:psi_derivative}, ~\ref{rem:zero_root}) and thus $\Delta v_l^i=0$. $\Delta v_l$ is thus a valid tangent vector on the corresponding simplex.
\begin{multline}
        \Delta \theta^T \nabla^2_\theta L_2 \Delta \theta = \\
         \begin{bmatrix}
         \hdots \Delta \theta_q & \hdots & \Delta \theta_l & \hdots 
         \end{bmatrix}
        \begin{bmatrix}
        \ddots & & \\
         & \mathbf{0}_{q} & \\
         & \ddots & \\
        &  \mathrm{diag}(\psi^{\prime\prime}(\theta_l) \odot \nabla_{v_l} f + \lambda_l^{\mathrm{s}} \psi^{\prime\prime}(\theta_l)) & \\
        & & \ddots
    \end{bmatrix}
        \begin{bmatrix}
        \vdots \\ \Delta \theta_q \\ \vdots \\ \Delta \theta_l \\ \vdots 
    \end{bmatrix} + \\
        \hfill
        \Delta \theta^T \mathrm{diag}(\varphi^{\prime}(\theta)) \nabla^2_{v} f \mathrm{diag}(\varphi^{\prime}(\theta)) \Delta \theta\\
        \\
         = \begin{bmatrix}
            \Delta \theta_1 & \dots &\Delta \theta_L
        \end{bmatrix}
        \begin{bmatrix}
        \ddots & & \\
        &  \mathrm{diag}(\psi^{\prime\prime}(\theta_l) \odot (\nabla_{v_l} f + \lambda_l^{\mathrm{s}} \mathbf{1})) & \\
        & & \ddots
    \end{bmatrix}
        \begin{bmatrix}
            \Delta \theta_1 \\ \vdots \\ \Delta \theta_L
        \end{bmatrix} + \\
        \hfill
        \Delta v^T \nabla^2_{v} f \Delta v \\
        \\
        = \begin{bmatrix}
            \Delta \theta_1 & \dots &\Delta \theta_L
        \end{bmatrix}
        \begin{bmatrix}
            \ddots &  & \\
             & \mathrm{diag}(\psi^{\prime\prime}(\theta_l) \odot \beta_l) &  \\
             &  & \ddots
        \end{bmatrix}
        \begin{bmatrix}
            \Delta \theta_1 \\ \vdots \\ \Delta \theta_L
        \end{bmatrix} + \Delta v^T \nabla^2 f \Delta v \\
        = \sum_{l=1}^L \sum_{i=1}^{n_l} \psi^{\prime\prime}(\theta_l)^i \beta_l^i (\Delta \theta_l^i)^2 + \Delta v^T \nabla^2 f \Delta v.
        \label{eq:kkt_hessian_rhs}
\end{multline}
Note that $\beta_l^i \geq 0$ \eqref{eq:kkt_simplex_nonnegative}, $\psi^{\prime\prime}(\theta_l)^i > 0$ \eqref{eq:psi_conditions} and $\Delta v^T \nabla^2 f \Delta v$ is nonnegative \eqref{eq:kkt_positive_definite_simplex}, implying that \eqref{eq:kkt_hessian_rhs} is nonnegative.
\qed \end{proof}

It is interesting to note that Theorem \ref{thm:simplex_1_smooth_1} does not imply bidirectional criticality. We need an additional assumption of second-order criticality of \eqref{eq:formulation_smooth} in order to prove first-order criticality of~\eqref{eq:formulation_simplex}. Recall that the weak and strong second-order KKT points coincide for~\eqref{eq:formulation_smooth}.
\begin{theorem}
\label{thm:smooth_2_simplex_1}
\leavevmode
If $\theta^\star$ is a second-order KKT point of~\eqref{eq:formulation_smooth} then $v^\star = \varphi(\theta^\star)$ is a first order KKT point of~\eqref{eq:formulation_simplex}.
\end{theorem}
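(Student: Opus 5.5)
We construct the simplex multipliers explicitly from the smooth ones and check~\eqref{eq:kkt_simplex} item by item. Given $(\theta^\star,\lambda^{s,\star})$ satisfying~\eqref{eq:kkt_sphere} and~\eqref{eq:second_order_kkt_sphere}, set $v^\star=\varphi(\theta^\star)$, $\lambda_l^\star=\lambda_l^{s,\star}$ and $\beta_l^\star=\nabla_{v_l} f(v^\star)+\lambda_l^\star\mathbf{1}$. This choice satisfies~\eqref{eq:kkt_simplex_zero_gradient} by construction. Feasibility~\eqref{eq:kkt_point_simplex} and $v_l^\star\ge 0$ follow from~\eqref{eq:kkt_point_smooth} and $\psi\ge 0$. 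The Euclidean condition~\eqref{eq:kkt_simplex_zero_euclidean_gradient} is~\eqref{eq:kkt_smooth_zero_euclidean_gradient}, because $\varphi$ is the identity on those blocks.

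Complementarity~\eqref{eq:kkt_simplex_hadamard_zero} comes from first-order information only. Condition~\eqref{eq:kkt_smooth_zero_gradient} reads $\psi'(\theta_l^\star)\odot\beta_l^\star=0$. If $(v_l^\star)^i\neq 0$, then $(\theta_l^\star)^i\neq0$ by Remark~\ref{rem:zero_root}, hence $\psi'((\theta_l^\star)^i)\neq 0$ by Remark~\ref{rem:psi_derivative}, forcing $(\beta_l^\star)^i=0$. Thus $\beta_l^\star\odot v_l^\star=0$. This also gives $(\beta_l^\star)^i=0\ge0$ on the support of $v_l^\star$.

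The main obstacle is sign-feasibility $(\beta_l^\star)^i\ge0$ on indices with $(\theta_l^\star)^i=0$, where first-order information says nothing. Here the second-order condition~\eqref{eq:second_order_kkt_sphere} enters. For such an index $(l,i)$, take the test direction $\Delta\theta=e_{l,i}$, the coordinate vector supported only at that entry. It is tangent because $\mathbf{1}^T(\psi'(\theta_l^\star)\odot e_i)=\psi'(0)=0$. Using~\eqref{eq:hessian_rearrangement}, the change-of-basis term vanishes, since $\mathrm{diag}(\varphi'(\theta^\star))e_{l,i}=\psi'(0)e_{l,i}=0$. Only the diagonal term survives:
\begin{equation*}
0\le e_{l,i}^T\nabla^2_\theta L_2\, e_{l,i}=\psi''(0)\bigl((\nabla_{v_l}f)^i+\lambda_l^{s,\star}\bigr)=\psi''(0)\,(\beta_l^\star)^i .
\end{equation*}
Since $\psi''(0)>0$, we get $(\beta_l^\star)^i\ge0$. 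This completes~\eqref{eq:kkt_simplex_nonnegative}.

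The only care needed is bookkeeping with the multiple blocks. The cross blocks in~\eqref{eq:sphere_hessian_terms}, namely $\nabla^2_{\theta_q\theta_l}L_2$ and $\nabla^2_{\theta_m\theta_l}L_2$, all carry a factor $\mathrm{diag}(\psi'(\theta_l))$, which annihilates $e_{l,i}$ when $(\theta_l^\star)^i=0$. The other blocks' tangency constraints are trivially met by the zero component, so the single-coordinate test direction is admissible in the product manifold.
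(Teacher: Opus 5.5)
Your proof is correct and follows the paper's own argument. You take the multiplier $\hat\beta_l=\nabla_{v_l}f+\lambda_l^{\mathrm{s}}\mathbf{1}$, get complementarity from the first-order condition $\psi'(\theta_l)\odot\hat\beta_l=0$ via the two remarks, and get $\hat\beta_l^i\ge 0$ on the indices with $\theta_l^i=0$ from a tangent test direction supported there, which kills the change-of-basis term and leaves $\psi''(0)\hat\beta_l^i\ge 0$. Your single-coordinate choice $e_{l,i}$ is also the precise form of the paper's $\delta_l$: that vector must isolate one index, otherwise only a sum of the $\hat\beta_l^i$ would be shown nonnegative.
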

\begin{proof}
    Similar to Li et al.~\cite[Equations 24-31]{li2021simplex}, we show that second-order KKT on smooth manifold implies first-order KKT point on $\mathcal{F}_{\mathrm{simplex}}$. From KKT conditions on the Smooth manifold, we have:
\begin{equation}
    \begin{split}
        \psi^{\prime}(\theta_l) \odot \nabla_{v_l} f + \lambda_l^{\mathrm{s}} \psi^{\prime}(\theta_l) = 0, &\quad l = 1,\dots,L.
    \end{split}
\end{equation}

Since the corresponding point on $\mathcal{F}_{\mathrm{simplex}}$  is not a stationary point, the gradient conditions on the Simplex are:
\begin{equation}
    \begin{split}
        \nabla_{v_l} f + \lambda_l \mathbf{1} - \beta_l = g_l, &\quad l = 1,\dots,L.
    \end{split}
\end{equation}
Absorbing $g_l$ in $\beta_l$, these conditions can be rewritten as:
\begin{equation}
    \begin{split}
        \nabla_{v_l} f + \lambda_l \mathbf{1} - \hat{\beta_l} = 0, &\quad l = 1,\dots,L.
    \end{split}
    \label{eq:beta_absorb}
\end{equation}
Hadamarding equation~\eqref{eq:beta_absorb} with $\psi^{\prime}(\theta_l)$ and setting $\lambda_l = \lambda_l^{\mathrm{s}}$
\begin{equation}
    \begin{split}
        \psi^{\prime}(\theta_l) \odot \nabla_{v_l} f + \lambda_l^{\mathrm{s}} \psi^{\prime}(\theta_l) - \psi^{\prime}(\theta_l) \odot \hat{\beta_l} = 0 \\\implies \psi^{\prime}(\theta_l) \odot \hat{\beta_l} = 0, &\quad l = 1,\dots,L.
    \end{split}
    \label{eq:kkt_hadmard_beta_equal_zero}
\end{equation}
Furthermore because our choice of parameterization, $\psi$ is 0 only at 0 (\ref{rem:zero_root}).
\begin{equation}
    \begin{split}
        \psi^{\prime}(\theta_l) \odot \hat{\beta_l} = 0 \implies \theta_l \odot \hat{\beta_l} = 0 \implies v_l \odot \hat{\beta_l} = 0, &\quad l = 1,\dots,L.
    \end{split}
\end{equation}
Note that because of the nature of the square mapping, the positivity and summing up to 1, KKT conditions of~\eqref{eq:kkt_simplex} are already satisfied.
As long as we can show that $\hat{\beta_l}\geq 0$, $v$ is a corresponding first-order KKT for $\theta$.
We see that because $\theta_l\odot\hat{\beta_l}=0$, $\hat{\beta_l^{i}}=0$ when $\theta_l^i \neq 0$. This implies that we just need to show $\hat{\beta_l^{i}} \geq 0$ when $\theta_l^i= 0$.

The second-order KKT criticality on smooth manifold implies that 
$$\Delta \theta \nabla^2 L_2 \Delta \theta \geq 0, \quad \forall\, \Delta\theta_l \ \text{s.t.}\ 
\mathbf{1}^T(\psi'(\theta_l) \odot \Delta\theta_l) = 0, \quad l = 1,\dots,L.$$
Consider tangent vectors of the type $ \Delta \theta = \begin{bmatrix}
        0 &
        \hdots &
        \delta_l &
        \hdots &
        &
        0 
    \end{bmatrix}$. Here $\delta_l^i$ is set to 1 only if $\theta_l^i=0$. All other components of $\delta_l$ including Euclidean tangent vectors are set to 0. Using this special tangent vector in the equation for the Product Hessian, we get
\begin{equation}
    \begin{split}
        \Delta \theta \nabla^2 L_2 \Delta \theta = \begin{bmatrix}
            0 & \dots & \delta_l^i & \dots & 0
        \end{bmatrix}
        \begin{bmatrix}
        \ddots & & \\
        &  \mathrm{diag}(\psi^{\prime\prime}(\theta_l) \odot (\nabla_{v_l} f + \lambda_l^{\mathrm{s}} 1)) & \\
        & & \ddots
    \end{bmatrix}
        \begin{bmatrix}
            0 \\ \vdots \\ \delta_l^i \\ \vdots \\ 0
        \end{bmatrix} \\ + (\Delta \theta \odot \varphi^{\prime}(\theta))^T \nabla_v^2 f (\Delta \theta \odot \varphi^{\prime}(\theta)).
    \end{split}
\end{equation}
Noting that $\mathrm{diag}(\psi'(\theta_l)) \delta_l = 0$ in view of the choice of $\delta_l$, and recalling~\eqref{eq:kkt_simplex}, we get
\begin{align}
\Delta\theta^T \nabla^2L_2 \Delta\theta
&= \delta_l^T \mathrm{diag}\left( \psi''(\theta_l) \odot \nabla_{v_l}f + \lambda^{\mathrm{s}}_l \psi''(\theta_l) \right) \delta_l \nonumber
\\ &= \delta_l^T \mathrm{diag}\left( \psi''(\theta_l) \odot (\nabla_{v_l}f + \lambda^{\mathrm{s}}_l 1) \right) \delta_l \nonumber
\\ &= \delta_l^T \mathrm{diag}\left( \psi''(\theta_l) \odot \hat \beta_l \right) \delta_l \nonumber
\\ &= \psi''(\theta_l)^i \hat\beta_l^i.
\end{align}
Since the LHS is nonnegative by virtue of being a second-order KKT point and $\psi^{\prime\prime}(\theta_l) \geq 0$ because of our choice of convex parameterization, $\hat\beta_l^i \geq 0$, $l = 1,\dots,L$.
Also,
\begin{equation}
    \nabla_{\theta_q} L_2 = 0 \implies \nabla_{v_q} L_1 = 0.
\end{equation}
Thus second-order criticality on smooth manifold implies first-order criticality on $\mathcal{F}_{\mathrm{simplex}}$ .
\qed \end{proof}

We now strengthen Theorem~\ref{thm:smooth_2_simplex_1} to get the converse of Theorem~\ref{thm:simplex_2_smooth_2}.
\begin{theorem}
\label{thm:smooth_2_simplex_2}
\leavevmode
If $\theta^\star$ is a second-order KKT point of~\eqref{eq:formulation_smooth} then $v^\star = \varphi(\theta^\star)$ is a weak second-order KKT point of~\eqref{eq:formulation_simplex}.
\end{theorem}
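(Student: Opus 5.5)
We build on Theorem~\ref{thm:smooth_2_simplex_1}: since $\theta^\star$ is a second-order KKT point of~\eqref{eq:formulation_smooth}, $v^\star=\varphi(\theta^\star)$ is already a first-order KKT point of~\eqref{eq:formulation_simplex}. The multipliers are $\lambda_l=\lambda_l^{\mathrm{s}}$ and $\hat\beta_l=\nabla_{v_l}f+\lambda_l^{\mathrm{s}}\mathbf{1}\ge 0$, with $\hat\beta_l\odot v_l^\star=0$. So it remains to verify~\eqref{eq:kkt_positive_definite_simplex} for every $\Delta v$ satisfying~\eqref{eq:kkt_tangent_vector_simplex} and~\eqref{eq:weak_kkt}. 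The strategy is to lift such a $\Delta v$ to a tangent vector $\Delta\theta$ of $\mathcal{F}_{\mathrm{smooth}}$ at $\theta^\star$, and then read off the simplex quadratic form from the Hessian decomposition~\eqref{eq:hessian_rearrangement}.

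\emph{Lifting.} Fix $\Delta v$ with $\mathbf{1}^T\Delta v_l=0$ and $\Delta v_l^i=0$ whenever $(v_l^\star)^i=0$. Define $\Delta\theta_q=\Delta v_q$ for the Euclidean blocks. For the simplex blocks, set
\[
\Delta\theta_l^i=\Delta v_l^i/\psi'((\theta_l^\star)^i)\ \text{ if } (\theta_l^\star)^i\neq 0,\qquad \Delta\theta_l^i=0\ \text{ otherwise}.
\]
This is well defined because, by Remarks~\ref{rem:psi_derivative} and~\ref{rem:zero_root}, $(\theta_l^\star)^i\neq0$ if and only if $(v_l^\star)^i\neq0$ if and only if $\psi'((\theta_l^\star)^i)\neq0$. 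Then $\psi'(\theta_l^\star)\odot\Delta\theta_l=\Delta v_l$ componentwise: on the support of $v_l^\star$ this holds by construction, and off the support both sides vanish, using the weak condition~\eqref{eq:weak_kkt}. Hence $\mathbf{1}^T(\psi'(\theta_l^\star)\odot\Delta\theta_l)=\mathbf{1}^T\Delta v_l=0$, so $\Delta\theta$ satisfies~\eqref{eq:kkt_tangent_vector_smooth}, and moreover $\mathrm{diag}(\varphi'(\theta^\star))\Delta\theta=\Delta v$.

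\emph{Evaluating the form.} Substituting into~\eqref{eq:hessian_rearrangement}, exactly as in the computation~\eqref{eq:kkt_hessian_rhs}, gives
\[
0\le \Delta\theta^T\nabla^2_\theta L_2\,\Delta\theta=\sum_{l=1}^L\sum_{i=1}^{n_l}\psi''((\theta_l^\star)^i)\,\hat\beta_l^i\,(\Delta\theta_l^i)^2+\Delta v^T\nabla^2_v f\,\Delta v .
\]
Each summand of the double sum vanishes. If $(\theta_l^\star)^i=0$ then $\Delta\theta_l^i=0$ by construction. If $(\theta_l^\star)^i\neq0$ then $(v_l^\star)^i\neq0$, so complementarity $\hat\beta_l\odot v_l^\star=0$ forces $\hat\beta_l^i=0$. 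Therefore $\Delta v^T\nabla_v^2 L_1\,\Delta v=\Delta v^T\nabla_v^2 f\,\Delta v\ge0$, which is~\eqref{eq:second_order_kkt_simplex}.

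The main obstacle is the lifting step. It is precisely where weakness is needed: a direction with $\Delta v_l^i\neq0$ at a zero coordinate (which the strong conditions would allow when $\hat\beta_l^i=0$) has no preimage under $\mathrm{diag}(\psi'(\theta^\star))$, because $\psi'(0)=0$. The argument must therefore check carefully that~\eqref{eq:weak_kkt} makes the lift exist. It must also check that the diagonal term is killed by complementarity and not merely shown nonnegative, since nonnegativity alone would only give the inequality in the wrong direction.
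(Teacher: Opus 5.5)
Your proof is correct and follows the paper's argument. You lift $\Delta v$ by $\Delta\theta_l^i=\Delta v_l^i/\psi'((\theta_l^\star)^i)$ on the support and $0$ elsewhere, then apply the Hessian decomposition~\eqref{eq:hessian_rearrangement}, where the diagonal term vanishes by complementarity on the support and by construction off it; you are also more explicit than the paper about the Euclidean blocks ($\Delta\theta_q=\Delta v_q$) and about how~\eqref{eq:weak_kkt} makes $\mathrm{diag}(\varphi'(\theta^\star))\Delta\theta=\Delta v$ hold exactly.
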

\begin{proof}
    Finally, we use our Hessian formula in~\eqref{eq:hessian_rearrangement} along with Li et al.~\cite[Equations 32-39]{li2021simplex} to prove that second-order critical point on smooth manifold implies weak second-order criticality on $\mathcal{F}_{\mathrm{simplex}}$.

\begin{equation}
    \begin{split}
        \Delta \theta_l^i \psi^{\prime}(\theta_l) = \Delta v_l^i \quad \text{if} \quad \theta_l^i \neq 0, \\
        \Delta \theta_l^i = 0 \quad \text{if} \quad \theta_l^i = 0.
    \end{split}
\end{equation}
It is easy to verify that this satisfies the smooth tangent space condition $\langle \Delta \theta_l, \psi^{\prime}(\theta_l) \rangle = 0$, $l = 1,\dots,L$.

Using first-order criticality and the expression for Hessian on smooth manifold we get
\begin{align}
    \Delta \theta^T \nabla^2_v L_2 \Delta \theta
    &=\sum_{l=1}^L \Delta \theta_l^T \mathrm{diag}(\hat{\beta_l} \odot \psi^{\prime\prime}(\theta_l)) \Delta \theta_l \nonumber
    \\&+\Delta \theta^T \mathrm{diag}(\varphi^{\prime}(\theta)) \nabla^2_{v} f \mathrm{diag}(\varphi^{\prime}(\theta)) \Delta \theta \nonumber
    \\&= \sum_{l=1}^L \langle \Delta \theta_l, \hat{\beta_l} \odot \psi^{\prime\prime}(\theta_l) \odot \Delta \theta_l \rangle + \Delta v^T \nabla^2_v f \Delta v \nonumber
    \\&= 0 + \Delta v^T \nabla^2_v f \Delta v.
\end{align}
Here the first term becomes 0 since $\hat{\beta_l^i}$ is 0 if $\theta_l^i$ is non zero, while $\Delta \theta_l^i$ is 0 if $\theta_l^i$ is 0. Since the LHS is nonnegative, the RHS has to be nonnegative, proving our assertion. 
\qed \end{proof}

\begin{remark}
    Theorem~\ref{thm:smooth_2_simplex_2} becomes false if ``weak'' is replaced by ``strong''. To see this, consider the simplex in $\mathbb{R}^3$ and the functions $f(v) = -(v^1 - 1)^2$ and $\varphi(\theta) = \theta \odot \theta$. The point $\theta = (1, 0, 0)$ is both a weak and strong second-order KKT point, but $v = (1, 0, 0)$ is not a strong second-order KKT point. However, in view of Theorem~\ref{thm:smooth_2_simplex_2}, $v$ is a \emph{weak} second-order KKT point. In fact, regardless of the objective function, every vertex of the simplex is a weak second-order KKT point because, under condition~\eqref{eq:second_order_kkt_simplex}, the constraints on $\Delta v$ imply $\Delta v = 0$, and thus $\Delta v^T \nabla^2 L_1 \Delta v = 0 \geq 0$ is satisfied.
\end{remark}

\begin{proposition}
\label{rem:legendre}
Let $\psi(\theta)$ satisfy~\eqref{eq:psi_conditions}, and $\operatorname{Im}(\psi')=\mathbb{R}$, then its Legendre transform $\psi^*(\theta^*) = \sup_{\theta \in \mathbb{R}} \bigl\{ \theta^* \theta - \psi(\theta) \bigr\}$ defines an alternative reparameterization with the same second-order KKT--preserving properties.
\end{proposition}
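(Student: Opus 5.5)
The plan is to show that $\psi^*$ itself satisfies the conditions~\eqref{eq:psi_conditions}. Once that is done, Theorems~\ref{thm:simplex_1_smooth_1}--\ref{thm:smooth_2_simplex_2} apply verbatim with $\psi$ replaced by $\psi^*$, since their proofs used only $\psi\ge 0$, $\psi(0)=0$, $\psi''>0$ and the consequences in Remarks~\ref{rem:psi_derivative} and~\ref{rem:zero_root}.

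\emph{Step 1: $\psi^*$ is finite, smooth, and its derivative inverts $\psi'$.} Since $\psi''>0$, the derivative $\psi'$ is strictly increasing, and by hypothesis $\operatorname{Im}(\psi')=\mathbb{R}$. Hence $\psi':\mathbb{R}\to\mathbb{R}$ is a bijection, and by the inverse function theorem (using $\psi''\neq 0$) it is a diffeomorphism. For each $\theta^*$, the concave function $\theta\mapsto\theta^*\theta-\psi(\theta)$ has the unique critical point $\theta=(\psi')^{-1}(\theta^*)$, which is its global maximizer. Therefore
\[
\psi^*(\theta^*)=\theta^*(\psi')^{-1}(\theta^*)-\psi\bigl((\psi')^{-1}(\theta^*)\bigr)
\]
is finite everywhere and as smooth as $\psi$. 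Differentiating and cancelling terms via $\psi'((\psi')^{-1}(\theta^*))=\theta^*$ gives $(\psi^*)'=(\psi')^{-1}$. Differentiating once more gives $(\psi^*)''(\theta^*)=1/\psi''\bigl((\psi')^{-1}(\theta^*)\bigr)>0$.

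\emph{Step 2: normalization and nonnegativity.} By Remark~\ref{rem:psi_derivative}, $\psi'(0)=0$, so $(\psi')^{-1}(0)=0$ and $\psi^*(0)=0-\psi(0)=0$. Taking $\theta=0$ in the supremum gives $\psi^*(\theta^*)\ge \theta^*\cdot 0-\psi(0)=0$, so $\psi^*:\mathbb{R}\to\mathbb{R}_{\ge 0}$. Thus $\psi^*$ satisfies~\eqref{eq:psi_conditions}.

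\emph{Step 3: conclusion.} Define $\varphi^*(\theta^*)=[\psi^*(\theta^*_1),\dots,\psi^*(\theta^*_L),\theta^*_{L+1},\dots]$ and the corresponding feasible set $\{\mathbf{1}^T\psi^*(\theta^*_l)=1\}$. By the earlier Proposition this set is a manifold, and by the third Remark ($\operatorname{Im}\psi^*=\mathbb{R}_{\ge0}$) the map $\varphi^*$ is surjective onto $\mathcal{F}_{\mathrm{simplex}}$. Applying Theorems~\ref{thm:simplex_1_smooth_1}--\ref{thm:smooth_2_simplex_2} with $\psi^*$ then yields the same correspondence of first- and (weak) second-order KKT points.

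\emph{Main obstacle.} The delicate step is Step 1: making sure the supremum is attained and finite for every $\theta^*$. Without the hypothesis $\operatorname{Im}(\psi')=\mathbb{R}$ it can fail; for example, $\psi=\cosh-1$ has $\psi'=\sinh$, which is fine, but a $\psi$ with bounded $\psi'$ would make $\psi^*=+\infty$ outside a bounded interval. That is precisely where the surjectivity of $\psi'$ enters. The regularity of $(\psi')^{-1}$ is then routine from the inverse function theorem.
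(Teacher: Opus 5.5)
Your proof is correct and follows the paper's route: you use the first-order condition $\theta^*=\psi'(\theta)$ to get $\psi^*(0)=0$ and $(\psi^*)''=1/\psi''>0$, conclude that $\psi^*$ satisfies \eqref{eq:psi_conditions}, and then invoke Theorems~\ref{thm:simplex_1_smooth_1}--\ref{thm:smooth_2_simplex_2}. You are somewhat more careful than the paper, since you justify that the supremum is finite and attained via bijectivity of $\psi'$ and get $\psi^*\ge 0$ directly from $\theta=0$, and for surjectivity you reuse the third Remark instead of the paper's direct analysis of $L(\theta)=\theta\psi'(\theta)-\psi(\theta)$, which is equally valid.
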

\begin{proof}
Differentiating $\theta^* \theta - \psi(\theta)$ with respect to $\theta$ and setting equal to zero we get $\theta^* = \psi'(\theta)$. Plugging this into the definition of  Legendre transform we get
\begin{equation}
\psi^*\!\left(\psi'(\theta)\right) = \psi'(\theta)\,\theta - \psi(\theta).
\end{equation}
Setting $\theta=0$ and using $\psi(0)=0$ yields $\psi^*(\psi'(0))=0$ and so $\psi^*(0)=0$ because $\psi'(0) = 0$ according to remark~\ref{rem:psi_derivative}.  
Differentiating $\psi^*(\theta^*) = \theta^* \theta - \psi(\theta)$ with respect to $\theta^*$ gives $(\psi^*)'(\theta^*) = \theta$ and further differentiation gives
\begin{equation}
(\psi^*)''(\theta^*) = \frac{d\theta}{d\theta^*}.
\end{equation}
Differentiating $\theta^* = \psi'(\theta)$ with respect to $\theta$ gives
\begin{equation}
\frac{d\theta^*}{d\theta} = \psi''(\theta),
\end{equation}
and hence by inversion
\begin{equation}
\frac{d\theta}{d\theta^*} = \frac{1}{\psi''(\theta)}.
\end{equation}
Therefore,
\begin{equation}
(\psi^*)''(\theta^*) = \frac{1}{\psi''(\theta)} > 0.
\end{equation}
Therefore $\psi^*$ satisfies the same conditions as in~\eqref{eq:psi_conditions}. As a consequence of Theorems \ref{thm:simplex_1_smooth_1}, \ref{thm:simplex_2_smooth_2}, \ref{thm:smooth_2_simplex_1}, and \ref{thm:smooth_2_simplex_2}, $\varphi^*(\theta^*)$  defines a valid reparameterization that preserves second-order KKT properties.

We showed above that $\psi^*(0)=0$. 
Since
$
\psi^*(\theta^*)=\sup_{\theta\in\mathbb{R}}\{\theta^*\theta-\psi(\theta)\},
$ define the Legendre transform in terms of $\theta$ as 
$L(\theta):=\psi^*(\psi'(\theta))
=\theta\psi'(\theta)-\psi(\theta).$ Since $L'(\theta)=\theta\psi''(\theta)$, and $\psi''(\theta)>0$, $L$ attains its unique minimum at $\theta=0$, where $L(0)=0$. Also, 
$\lim_{\theta\to\pm\infty}L(\theta)=\infty.$
Thus,
$\operatorname{Im}(\psi^*)=\operatorname{Im}(L)=[0,\infty),$
so $\psi^*$ is surjective onto $[0,\infty)$, implying that the induced map to the simplex is surjective.
\qed \end{proof}

\section{Applications}
We now proceed to show two important applications of Theorems \ref{thm:simplex_1_smooth_1}, \ref{thm:simplex_2_smooth_2}, \ref{thm:smooth_2_simplex_1}, and \ref{thm:smooth_2_simplex_2}. 

\subsection{Application 1: Simplex Structured Tensor Decomposition for Learning Probability Distributions}
Consider a set of $N$ random variables $\{X_1, X_2, \dots ,X_N\}$ where each $X_n$ takes discrete values in $[1, I_n]$, $n = 1,\dots,N$. Let $\mathcal{P}$ represent the joint PMF tensor of these random variables where $\mathcal{P}(i_1, \dots, i_N) = \mathrm{Pr}\{ X_1=i_1, \dots, X_N=i_N\}$.

Similar to Kargas et al.~\cite{kargas2018tensors}, we assume that $\mathcal{P}$ admits a low-rank non-negative Canonical Polyadic Decomposition (CPD) with $F$ components, which means that there are $N$ factor matrices $A_1, \dots, A_N$ and a loading vector $\lambda$ such that:
\begin{equation}
    \begin{aligned}
        \mathcal{P} &= \sum_{\phi=1}^F \lambda_\phi A_1(:,\phi) \circ A_2(:,\phi) \circ \cdots \circ A_N(:,\phi) \\
          &\triangleq [[\lambda, A_1, \ldots, A_N]] \\
        \mathbf{1}^T \lambda &= 1, \\
        \mathbf{1}^T A_n(:,\phi) &= 1, \quad n = 1,\dots,N, \quad \phi = 1,\dots,F.
    \end{aligned}
    \label{eq:CPD_writeup}
\end{equation}

The problem is to fit model~\eqref{eq:CPD_writeup} to an empirical tensor $\mathcal{Q}$ estimated from data, according to some fit quality measure. This model can then be used to perform probabilistic topic modelling of sequences \cite{sidiropoulos2017tensor,kargas2018tensors} or estimate mutual information for doing feature selection \cite{amiridi2021information}. Some previous works have focused on Least Squares Error \cite{kargas2018tensors} or Kullback-Leibler (KL) Divergence \cite{yeredor2019estimation}, and in our current work, we use the latter.

We stack our factor matrices into a parameter vector $v$:
\begin{equation}
    \begin{aligned}
        v &= \bigl(\lambda , A_1(:,1) , \dots , A_1(:,F) , \dots , A_N(:,1) , \dots , A_N(:,F)\bigr), \\
        v &\in \Delta^{F-1}\times\Delta^{I_1-1}\times\dots\times\Delta^{I_1-1}\times\dots
        \times\Delta^{I_N-1}\times\dots\times\Delta^{I_N-1}.
        \label{eq:vstack}
    \end{aligned}
\end{equation}

The KL-based objective function then reads
\begin{equation}
    f(v) = \sum_{i_N=1}^{I_N} \dots \sum_{i_1=1}^{I_1} \mathcal{Q}(i_1,\dots,i_N) \ln \frac{\mathcal{Q}(i_1,\dots,i_N)}{\mathcal{\hat{P}}(i_1,\dots,i_N ; v)}
    \label{eq:grad_factor}
\end{equation}
where $\mathcal{\hat{P}}(;v)$ indicates the tensor generated by unstacking the parameter $v$ into the factor matrices and use the outer product in \eqref{eq:CPD_writeup} to generate the PMF tensor $\mathcal{P}$.

 Based on these operations, we introduce the RGD Algorithm, as shown in Algorithm~\ref{alg:riemann} and RGD with Barzilai-Borwein step (RGD BB), outlined in Algorithm~\ref{alg:riemannian_bb} for both applications of this paper. Algorithms \ref{alg:riemann} and \ref{alg:riemannian_bb} also require us to define $\textrm{product\_retraction}$ and $\textrm{product\_transport}$. On the sphere $\mathbb{S}^{n-1}$, we define them by projection, respectively onto $\mathbb{S}^{n-1}$ and onto its tangent space:
\begin{align}
    \text{retraction}_{\theta}: T_\theta \mathbb{S}^{n-1} \to \mathbb{S}^{n-1}, \nonumber\\
    \text{retraction}_{\theta}(\Delta \theta ) = \frac{\theta + \Delta \theta}{\lVert \theta + \Delta \theta  \rVert}_2,\\
     \text{product\_retraction} = \text{retraction} \times \dots \times \text{retraction} 
     \label{eq:retraction}
\end{align}
\begin{align}
    \text{transport}_{\theta_1}: T_\theta \mathbb{S}^{n-1} \to T_{\theta_2} \mathbb{S}^{n-1},
    \nonumber \\
    \text{transport}_{\theta_1}(\Delta \theta) = \Delta \theta-\langle \Delta \theta, \theta_2 \rangle \, \theta_2,\\
    \text{product\_transport} = \text{transport} \times \dots \times \text{transport}
    \label{eq:transport}
\end{align}
They are bonafide retraction and vector transport in view of~\cite[Proposition~5]{AbsMal2012} and~\cite[(8.10)]{AMS2008}.

\begin{algorithm}
\caption{RGD with Backtracking Update Step}\label{alg:riemann}
\begin{algorithmic}[1]
\Require initial step size $\eta>0$, initial reparameterized factor vector $\theta$, empirical tensor $\mathcal{Q}$, Armijo constant $c>0$, and backtracking decay factor $\beta\in(0,1)$.
\State $\alpha \gets \eta$ \Comment{Use $\eta$ as initial step size}
\State $G_\mathrm{R} \gets \prod_{\theta} \nabla_{\theta} f(\theta \odot \theta)$ \Comment{Riemannian Gradient via projection in product tangent space}
\State $m \gets 1$
\State $\textrm{ArmijoFlag} \gets \textrm{False}$
\While{$\textrm{ArmijoFlag}=\textrm{False}$ and $m \leq 25$} \label{line:5}
\State $\hat{\theta} = \textrm{product\_retraction}(\theta, -\alpha*G_\mathrm{R})$  \Comment{See, e.g.,~\eqref{eq:retraction}}
\If{$f(\hat{\theta}) - f(\theta) \leq -c*\alpha*\lVert G_\mathrm{R} \rVert^2$}
    \State $\textrm{ArmijoFlag} = \textrm{True}$  
\Else{}
    \State $\alpha \gets \beta \times \alpha$
\EndIf
\State $m \gets m+1$
\EndWhile
\State \Return $\hat{\theta}$
\end{algorithmic}
\end{algorithm}

\begin{algorithm}
\caption{RGD BB with Backtracking Update Step}\label{alg:riemannian_bb}
\begin{algorithmic}[1]
\Require initial step size $\eta>0$, initial reparameterized factor vector $\theta$, empirical tensor $\mathcal{Q}$, Armijo constant $c>0$, backtracking decay factor $\beta\in(0,1)$, Barzilai--Borwein step-size bounds $\alpha_{\min},\alpha_{\max}$ satisfying $0<\alpha_{\min}<\alpha_{\max}<\infty$, and memory parameter $M\in\mathbb{N}$.
\State $\alpha \gets \eta$ \Comment{Use $\eta$ as initial step size}
\State $G_\mathrm{R} \gets \prod_{\theta} \nabla_{\theta} f(\theta \odot \theta)$ \Comment{Riemannian Gradient via projection in product tangent space}
\State $m \gets 1$
\State $\textrm{ArmijoFlag} \gets \textrm{False}$
\While{$\textrm{ArmijoFlag}=\textrm{False}$ and $m \leq 25$}
    \State $\hat{\theta} = \textrm{product\_retraction}(\theta, -\alpha \cdot G_\mathrm{R})$        \Comment{See, e.g.,~\eqref{eq:retraction}}
    \If{$f(\hat{\theta}) \le \max\limits_{i = \max(1,m-M),\dots,m} f(\theta_i) - c * \alpha * \lVert G_\mathrm{R} \rVert^2$}
        \State $\textrm{ArmijoFlag} \gets \textrm{True}$  
    \Else
        \State $\alpha \gets \beta \cdot \alpha$
    \EndIf
    \State $m \gets m+1$
\EndWhile
\State $G_\mathrm{R}^{\text{new}} \gets \prod_{\theta} \nabla_{\theta} f(\hat{\theta} \odot \hat{\theta})$ \Comment{New gradient}
\State $\tilde{G}_\mathrm{R} \gets \textrm{product\_transport}(G_\mathrm{R}, \theta, \hat{\theta})$ \Comment{See, e.g.,~\eqref{eq:transport}}
\State $S \gets -\alpha \cdot \tilde{G}_\mathrm{R}$ 
\State $Y \gets G_\mathrm{R}^{\text{new}} - \tilde{G}_\mathrm{R}$
\State $\textrm{denom} \gets | \langle S, Y \rangle |$
\If{$\textrm{denom} < 10^{-30}$}
    \State $\alpha_{\mathrm{BB}}  \gets \alpha_{\max}$
\Else
    \State $\alpha_{\mathrm{BB}} \gets \frac{\langle S, S \rangle}{\textrm{denom}}$
\EndIf
\State $\alpha \gets \max(\min(\alpha_{\mathrm{BB}}, \alpha_{\max}), \alpha_{\min})$
\State $\theta \gets \hat{\theta}$
\State \Return $\hat{\theta}, \alpha$ \Comment{Return updated parameters and new step size}
\end{algorithmic}
\end{algorithm}

\subsection{Application 2: SRVF Registration for Functional Data Analysis}
\label{subsec:srvf_registration}

In this section, we briefly discuss the SRVF registration framework, referring to~\cite{kumar2025shape} for further details. In contrast to~\cite{kumar2025shape}, where SRVF registration is performed using dynamic programming and used for Shape PCA and clinical motion analysis, our goal here is to show that the registration problem has a product-simplex structure and can be optimized using the smooth reparameterization framework developed in this paper.

As mentioned in~\cite{kumar2025shape}, let $\{ \beta_l \}_{l=1}^L$ be a collection of curves representing some functional observations, each $\beta_l:[0,1]\to\mathbb{R}$. Let $\mathrm{\Gamma}$ denote the set of smooth, strictly increasing maps $\gamma:[0,1]\to[0,1]$ satisfying $\gamma(0)=0$, $\gamma(1)=1$, and $\dot{\gamma}>0$. Elements of $\mathrm{\Gamma}$ are positive diffeomorphisms, also called phases, and are used to model temporal alignments. For a curve $\beta_l$ and a warping function $\gamma_l\in\mathrm{\Gamma}$, the composition $\beta_l\circ\gamma_l$ represents a time-warped version of $\beta_l$.

We first describe the alignment problem in the pairwise setting. Given two curves, $\beta_1$ and $\beta_2$, the goal is to find a time-warping function $\gamma_2$ such that the peaks and valleys of
$\beta_2\circ\gamma_2$
are aligned with those of $\beta_1$. One way to formulate this problem is $\gamma_2^\star\in \arg\min_{\gamma\in\mathrm{\Gamma}} \left\lVert \beta_1 - \beta_2\circ\gamma \right\rVert_2$ where $\left\lVert f \right\rVert_2 =\sqrt{\int_0^1 f(t)^2dt}$
denotes the classical $\mathbb{L}^2$ norm. However, this direct formulation has several mathematical and practical limitations. In particular, this optimization is degenerate because $\| \cdot \|_2$ is not invariant to reparameterization of time and $\|\beta_1 - \beta_2\|_2 \neq \|\beta_1\circ\gamma - \beta_2\circ\gamma\|_2$.

The SRVF framework addresses this limitation by using an elastic Riemannian metric that is invariant to simultaneous time warpings of the input curves. The use of this metric is simplified using a mathematical representation called the square-root velocity function (SRVF). For a curve $\beta_l(t)$, the SRVF is defined as $q_l(t)= \operatorname{sign}(\dot{\beta}_l(t))\sqrt{|\dot{\beta}_l(t)|} = \frac{\dot{\beta}_l(t)}{\sqrt{|\dot{\beta}_l(t)|}}.$ In case, $\dot{\beta}_l(t) = 0$, we set $q_l(t) = 0$.
Under a time warping of $\beta_l$ by  $\gamma_l$, {\it i.e.}, $\beta_l \mapsto \beta_l \circ \gamma_l$, the SRVF transforms as $\gamma_l\cdot q_l=(q_l\circ\gamma_l)\sqrt{\dot{\gamma}_l}.$
This transformation is the basis of the elastic approach to curve alignment. Given two curves $\beta_1$ and $\beta_2$, the pairwise SRVF registration problem is
\begin{equation}
    \gamma_2^\star
    =
    \arg\min_{\gamma\in\mathrm{\Gamma}}
    \left\lVert
    q_1 - (q_2\circ\gamma)\sqrt{\dot{\gamma}}
    \right\rVert_2^2.
    \label{eq:srvf_pairwise_registration}
\end{equation}
The corresponding SRVF distance is invariant to time warpings of the input curves, so it compares shapes rather than the rates at which the curves are traversed.

For multiple curves, the registration problem can be written as
\begin{equation}
    \hat{\mu}_L  = \min_{\mu}
    \left(\sum_{l=1}^L \min_{\gamma_l}
    \left\lVert
    \mu - (q_l\circ\gamma_l)\sqrt{\dot{\gamma}_l}
    \right\rVert^2 \right),
    \label{eq:srvf_multiple_registration}
\end{equation}
where $\hat{\mu}_L$ is the mean SRVF shape and optimal $\gamma_l$ aligns the $l^{th}$ curve to this mean.

We now use this formulation to obtain an optimization problem over products of simplices. Let
$s_t=t/T$, $t=0,\dots,T$,
be a uniform grid on $[0,1]$ with spacing $\Delta t=1/T$. Thus, the grid has $T+1$ points and $T$ intervals. For each warping function, define the discrete warping-rate increment variable
\begin{equation}
    v_l(t)
    =
    \gamma_l(s_{t+1})-\gamma_l(s_t),
    \qquad
    t=0,\dots,T-1.
\end{equation}

Equivalently, when the warping function is represented through its derivative,
\begin{equation}
    v_l(t)
    \approx
    \dot{\gamma}_l(s_t)\Delta t.
\end{equation}

Recall the boundary conditions $\gamma_l$, $\gamma_l(0)=0$, $\gamma_l(1)=1$, and the strictly increasing constraint $\dot{\gamma}_l>0$. In practice, the strictly increasing constraint is relaxed with a non-decreasing $\dot{\gamma}_l\geq 0$. This allows us to rewrite the boundary conditions in terms of the derivative:
\begin{equation}
    \int_{t=0}^1 \dot{\gamma}_l dt = \gamma_l(1) - \gamma_l(0) = 1.
\end{equation}

For its discrete analog, this is equivalent to:
\begin{equation}
    \sum_{t=0}^{T-1} v_l(t) = 1.
\end{equation}

Thus, each $v_l$ lies in the simplex
\begin{equation}
    \Delta^T
    =
    \left\{
    v\in\mathbb{R}_{\geq 0}^T:
    \sum_{t=0}^{T-1}v(t)=1
    \right\}.
\end{equation}

The corresponding discrete warping function is obtained by cumulative summation:
\begin{equation}
    \gamma_l^d(s_0)=0,
    \qquad
    \gamma_l^d(s_t)
    =
    \sum_{k=0}^{t-1}v_l(k),
    \qquad
    t=1,\dots,T.
\end{equation}

We represent the discretized mean SRVF by a Euclidean variable $v_{L+1}\in\mathbb{R}^T$.
Hence, the feasible set for the discretized registration problem is
\begin{equation}
    \mathcal{F}_{\mathrm{simplex}}
    =
    \underbrace{
    \Delta^T\times\cdots\times\Delta^T
    }_{L~\mathrm{times}}
    \times
    \mathbb{R}^T,
\end{equation}
where the first $L$ factors correspond to the warping-rate variables and the final factor corresponds to the discretized mean SRVF.

Therefore, Theorems~\ref{thm:simplex_1_smooth_1},
\ref{thm:simplex_2_smooth_2},
\ref{thm:smooth_2_simplex_1}, and~\ref{thm:smooth_2_simplex_2}
apply to this discretized SRVF registration problem and we compare PGD, RGD, PGD BB, and RGD BB for optimizing the resulting registration objective. For each algorithm, we update $v_l$ in each iteration, and subsequently determine the corresponding warping function $\gamma_l^d$. By composing an unaligned function with corresponding $\gamma_l^d$ using linear interpolation, we obtain the aligned function. The aligned functions are then used to update the mean $v_{L+1}$ in each iteration. Through this iterative process, we ultimately achieve the optimal $v_l$ and the corresponding aligned functions. Autograd in PyTorch is used to obtained gradient of the objective function.

\section{Numerical Results}
\label{results}
We benchmarked the following algorithms on a MacBook Air (Apple M3, 25GB RAM). All algorithms were implemented in Jax and PyTorch library in Python.

\begin{enumerate}
    \item RGD BB with backtracking update step \cite{iannazzo2018riemannian}, outlined in Algorithm~\ref{alg:riemannian_bb}.
    \item RGD with Armijo Condition based backtracking (Algorithm~\ref{alg:riemann}) as outlined in Li et al.~\cite{li2021simplex}, without the Wolfe condition.
    \item PGD with projection based on Duchi et al.~\cite{duchi2008efficient} outlined in Algorithm~\ref{alg:pgd}.
    \item PGD with Barzilai-Borwein step size (PGD BB) \cite{birgin2000nonmonotone} with projection based on Duchi et al.~\cite{duchi2008efficient}.
\end{enumerate}
\begin{algorithm}[!h]
\caption{PGD with Backtracking Update Step}\label{alg:pgd}
\begin{algorithmic}[1]
\Require initial step size $\eta>0$, initial reparameterized factor vector $\theta$, empirical tensor $\mathcal{Q}$, Armijo constant $c>0$, and backtracking decay factor $\beta\in(0,1)$.
\State $v = \theta \odot \theta$
\State $G_\mathrm{E} \gets \nabla_v f(v)$ \Comment{Euclidean Gradient}
\State $\hat{v} = \textrm{product\_simplex\_projection}(v -\eta*G_\mathrm{E})$
\State $\alpha \gets 1$
\State $m \gets 1$
\State $\textrm{ArmijoFlag} \gets \textrm{False}$
\While{\textrm{ArmijoFlag} = \textrm{False} and $m \leq 25$}
\State $v_{\textrm{new}} = v + \alpha*(\hat{v}-v)$
\If{$f(v_{\textrm{new}}) - f(v) <= c*\alpha*\langle G_\mathrm{E},  \hat{v} - v \rangle$}
    \State \textrm{ArmijoFlag} = \textrm{True}  
\Else{}
    \State $\alpha \gets \beta \times \alpha$
\EndIf
\State $m \gets m+1$
\EndWhile
\State \Return{$v_{\textrm{new}}$} 
\end{algorithmic}
\end{algorithm}
\subsection{Simplex Constrained Canonical Polyadic Tensor Factorization}
In this section, we present results from numerical simulations to demonstrate the performance of our algorithms. We generate a random joint probability distribution with three random variables, $P(X_1, X_2, X_3)$. The data is generated from a Naive Bayes CPD Model:
\[
P(X_1, X_2, X_3) = \sum_{\phi=1}^F P(X_1|t) P(X_2|t) P(X_3|t) P(t).
\]
The number of values of each variable $X_n$ is chosen from the set $\{4, 8, 16\}$. The rank $F$ of the CPD is selected from $\{4, 8, 16\}$. For all algorithms, we test initial step sizes ($\eta$) of the form $2^i$, where $i$ ranges between $[-4, 4]$. All algorithms are run for a combination of these parameters until a maximum time of 20 seconds is reached. The decay factor $\beta$ is set to 0.5 for both RGD BB and PGD BB, and 0.75 for RGD and PGD. A small decay constant $c = 10^{-5}$ is chosen for the BB algorithms, while $c = 10^{-4}$ is used for PGD and RGD. These parameters were hand-tuned to achieve optimal performance for each algorithm. The value $m \leq 25$ in line~\ref{line:5} of Algorithm~\ref{alg:riemann} comes from~\cite{li2021simplex}.

\begin{figure*}[t]
    \centering
    \includegraphics[width=0.33\textwidth]{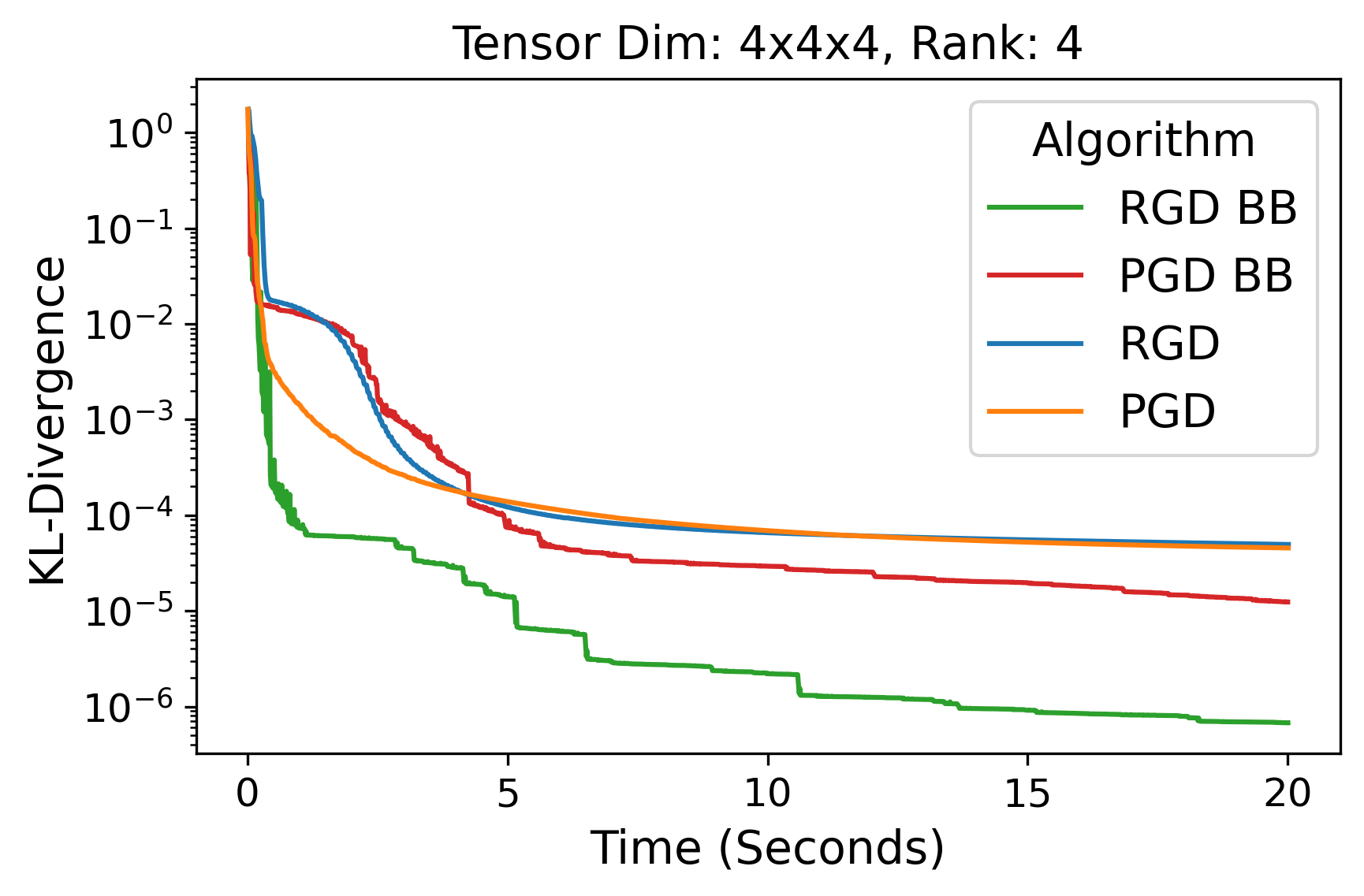}%
    \includegraphics[width=0.33\textwidth]{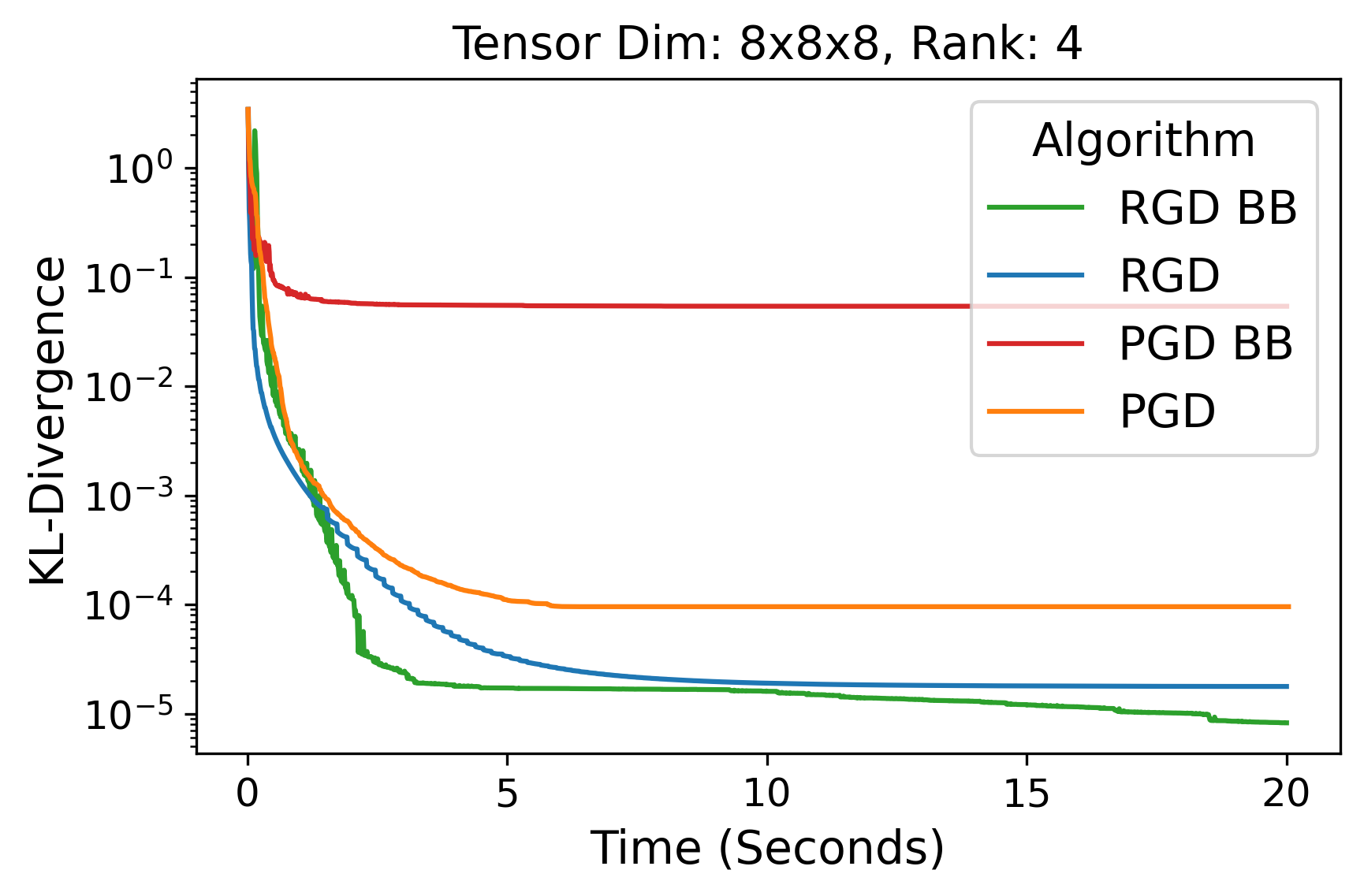}%
    \includegraphics[width=0.33\textwidth]{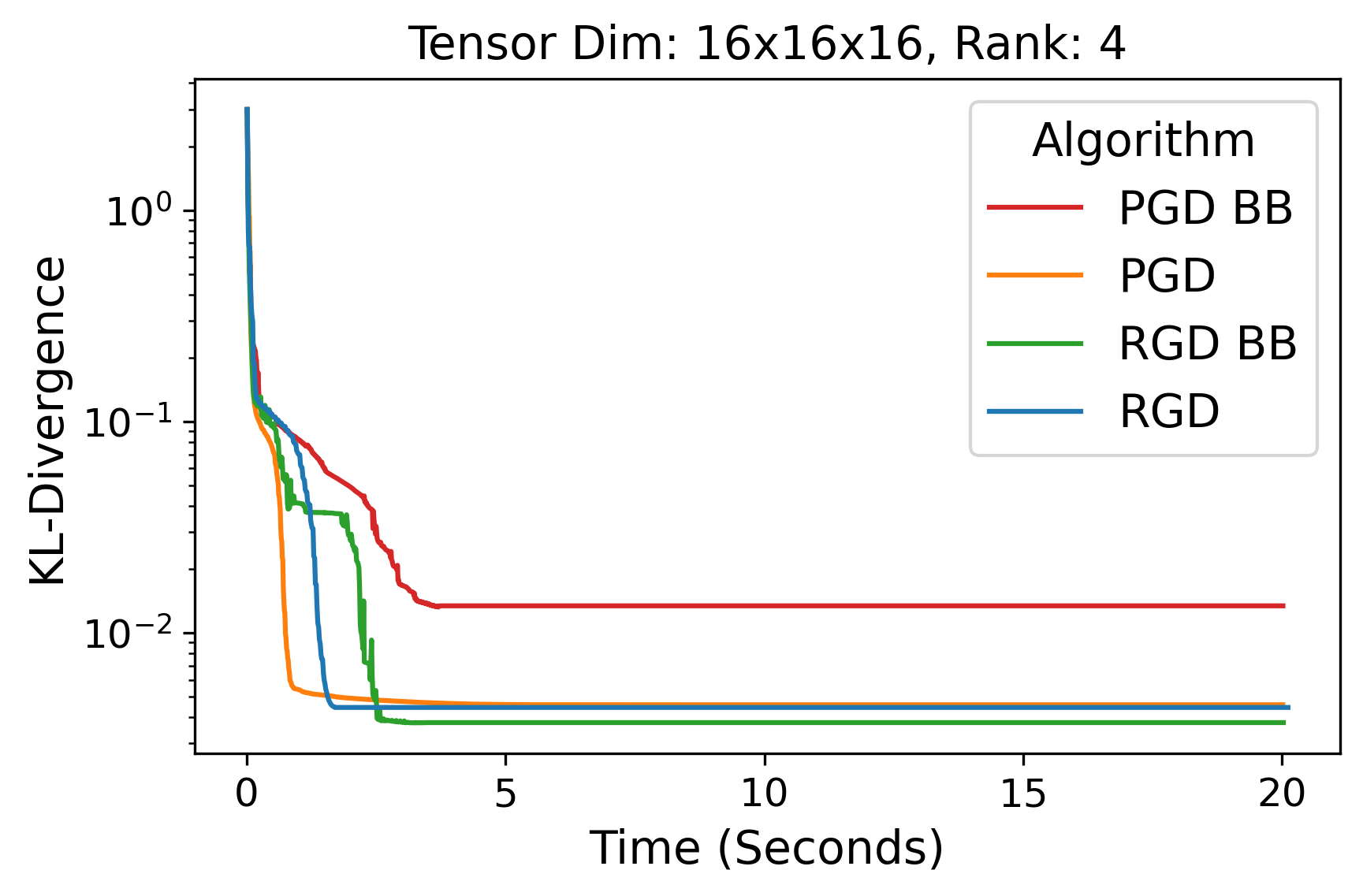}%
    
    \includegraphics[width=0.33\textwidth]{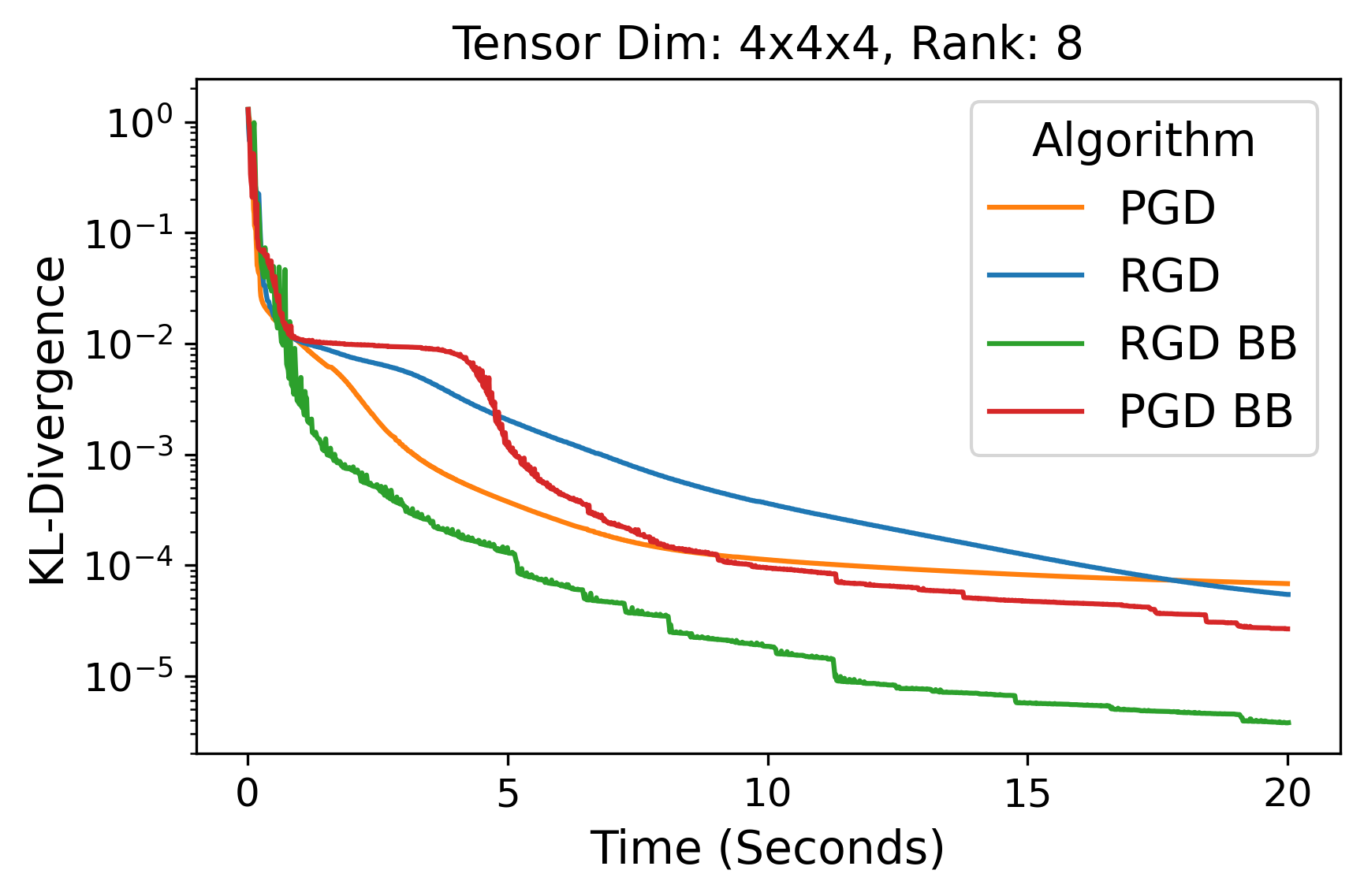}%
    \includegraphics[width=0.33\textwidth]{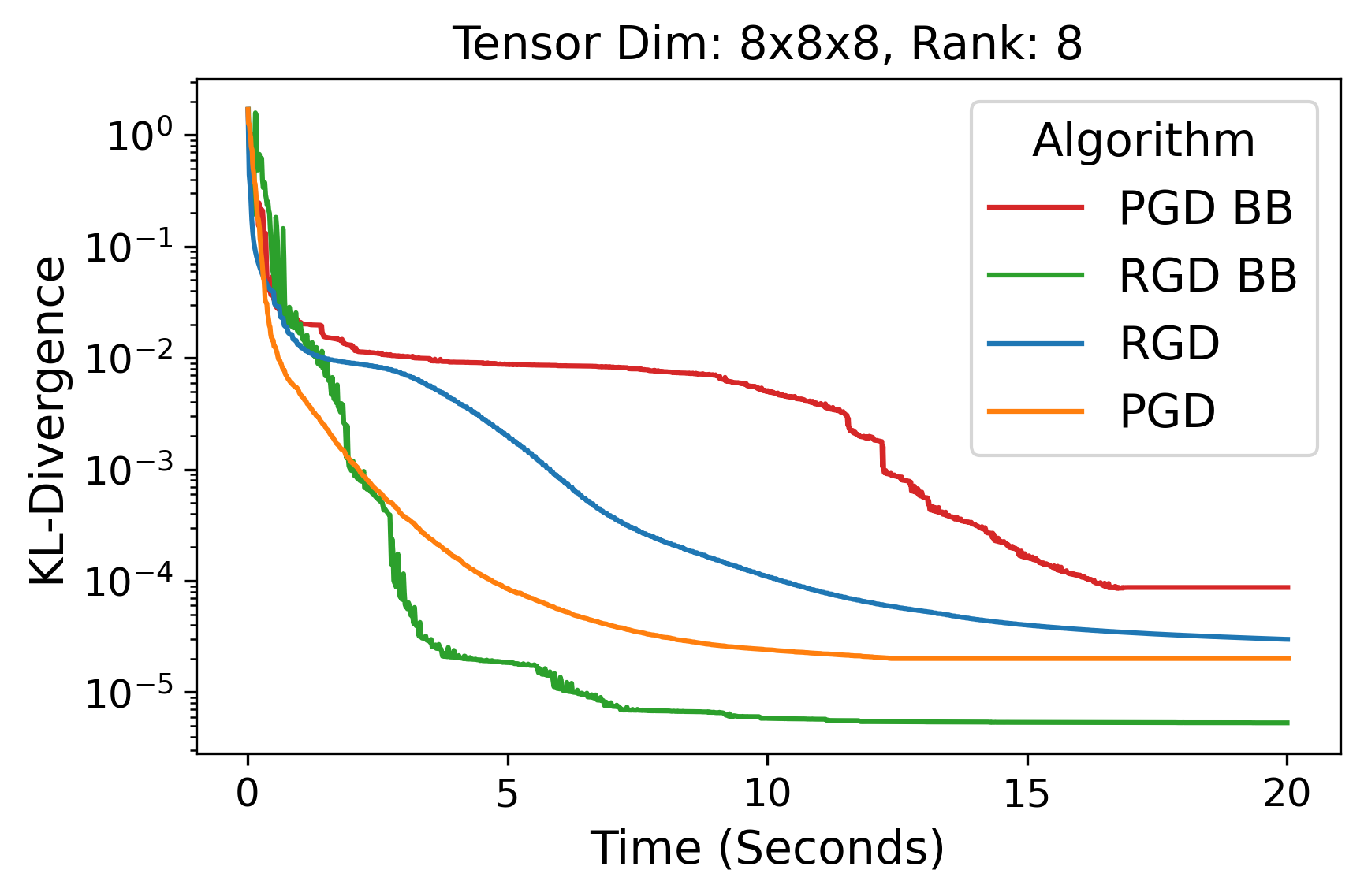}%
    \includegraphics[width=0.33\textwidth]{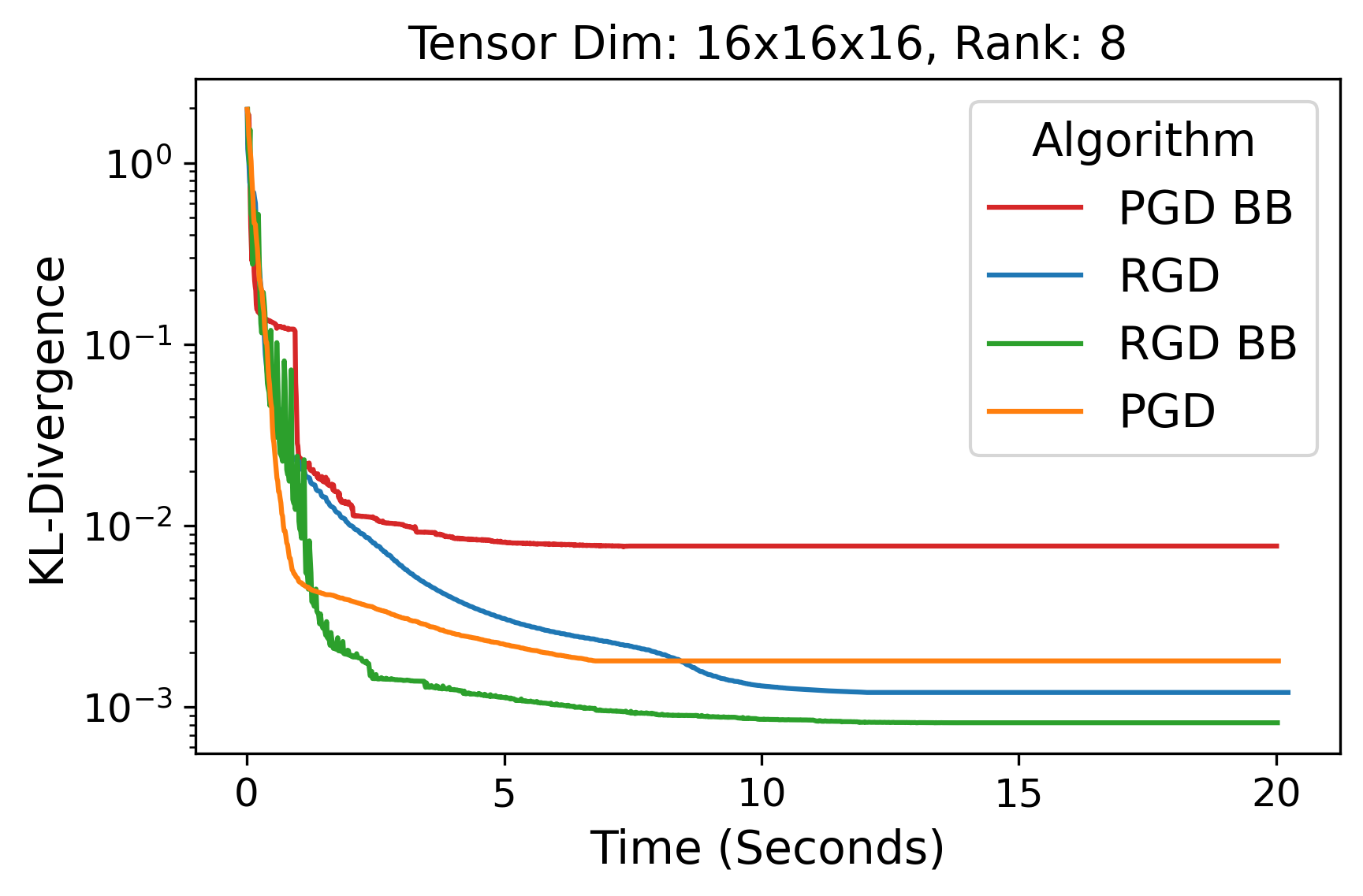}%
    
    \includegraphics[width=0.33\textwidth]{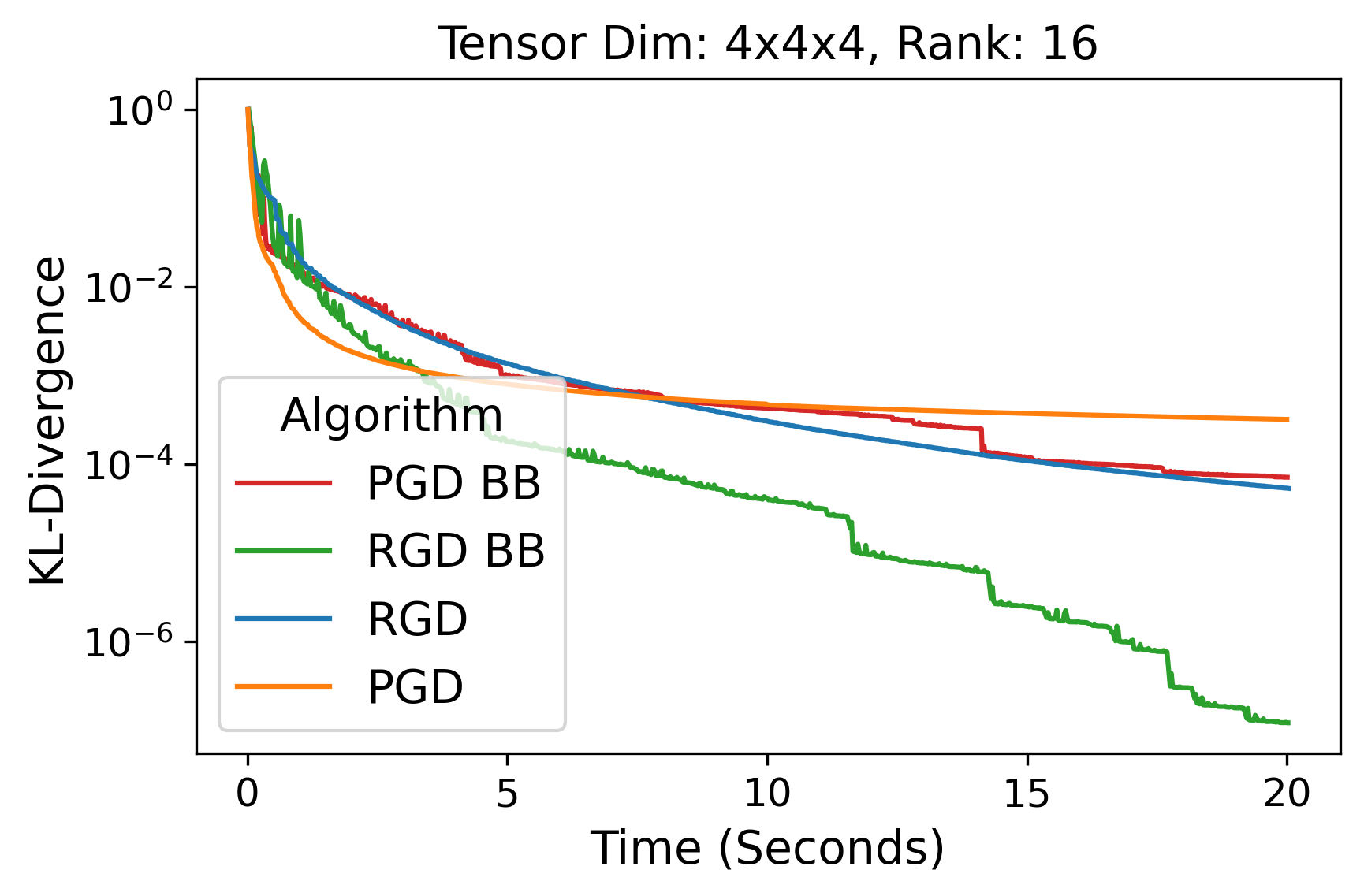}%
    \includegraphics[width=0.33\textwidth]{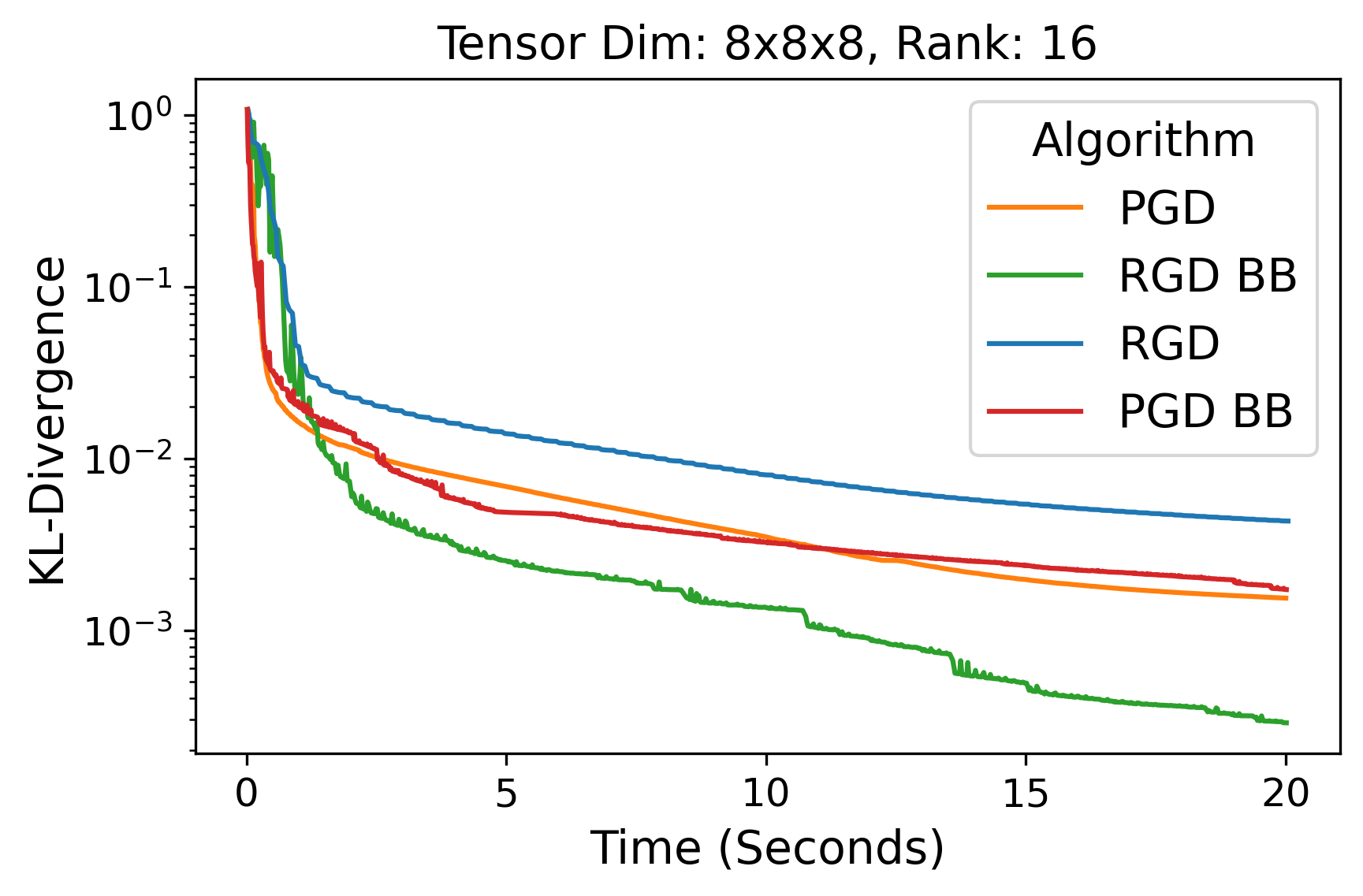}%
    \includegraphics[width=0.33\textwidth]{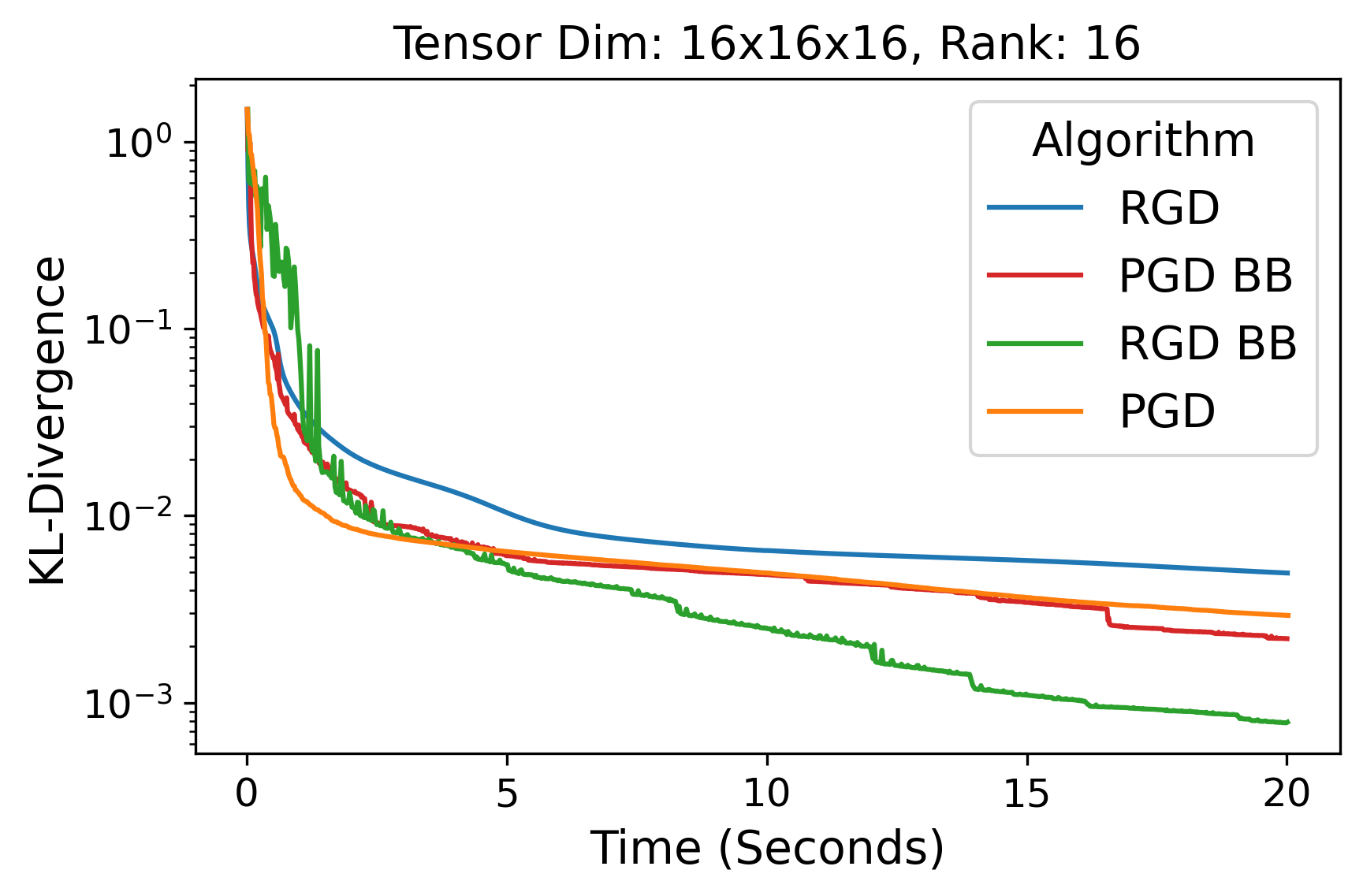}%
    \caption{For each algorithm, we choose best step size $\eta$ for different rank and tensor size. RGD BB outperforms other algorithm, often by several orders of magnitude. Higher-dimensional tensors are harder to factorize than lower-dimensional ones, although rank of tensor does not seem to affect the final quality of the solution as much.}
    \label{fig:tensor_performance}
\end{figure*}
In Fig.~\ref{fig:tensor_performance}, we report the results of the best performing initial step size $\eta$ for each algorithm across a combination of tensor sizes and ranks. As seen in most rank and tensor size combinations, RGD BB performs the best, often outperforming the other algorithms by several orders of magnitude in the final solution cost. This improvement appears to be primarily due to RGD BB requiring fewer iterations to reach low-cost solutions. Interestingly, while higher-dimensional tensors were harder to factorize, as evidenced by the lower KL-Divergence, we did not observe a similar effect with the rank.

In our Appendix \ref{appendix_b}, we report results on different initial step sizes $\eta$ for each algorithm. Out of all the algorithms, RGD BB often performs the best across a range of step sizes, being less sensitive to its misspecification. This makes it particularly attractive as a plug-and-play algorithm for large-scale problems, where extensive grid search over parameters is infeasible.

\begin{figure}[!b]
    \centering
    \includegraphics[width=1\linewidth]{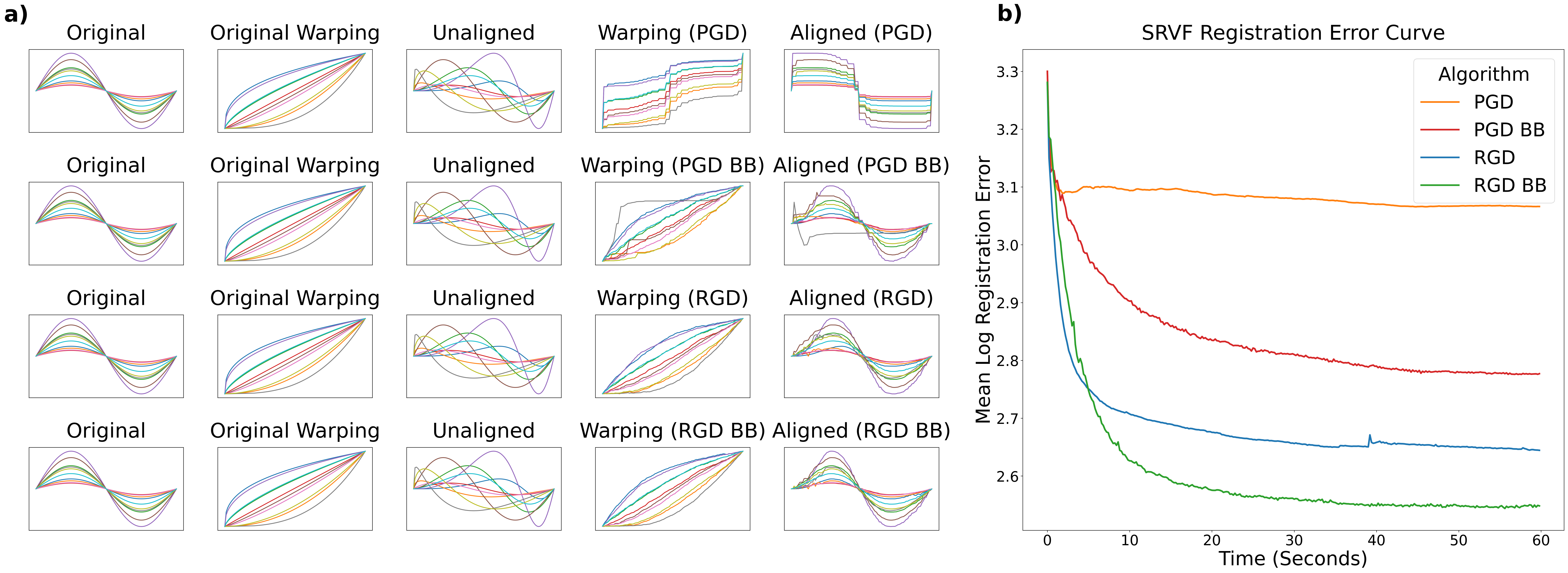}
    \caption{Comparison of algorithms in SRVF registration: a) Starting from unaligned functions in third column, RGD and RGD BB results in much better alignment and reconstruction of original sinousoidal function $\hat{\beta_l}$ shapes whereas PGD and PGD BB results in a distorted waveform. b) SRVF registration error curve of RGD and RGD BB compared to PGD and PGD BB.}
    \label{fig:srvf_result}
\end{figure}

\subsection{Functional Data Registration}
We start by randomly generating 10 sinusoidal functions $\beta_l$ with varying amplitudes (Fig.~\ref{fig:srvf_result}(a), first column). These could, for example, indicate a population of $L$ individuals performing a periodic motion (e.g., doing a bicep curl) at different speeds. These amplitude functions are then composed with some random diffeomorphic warpings of time $\gamma$. The result is unaligned functions in the third column of Fig.~\ref{fig:srvf_result}(a), which can synthesize the effect of population level variability across individuals performing the same motion at different rates. Our goal is to recover the original sinusoidal shape from these temporally distorted motions.

We parameterize the problem on the simplex in terms of $v$ and also reparameterize it smoothly in terms of $\theta$ under $\psi=\theta \odot \theta$. The warping functions $v_l$ are randomly intialized on the simplex ensuring $v_l\geq0$ and $\mathbf{1}^T v_l = 1$. The Euclidean mean SRVF parameter is stored in $v_{L+1}$. For all algorithms, we alternate between the following two steps: (a) computing the optimal warping function $\gamma_l^d$ via optimization of $v_l$, and (b) composing the unaligned functions with the warping functions to obtain the aligned functions and updating the SRVF mean $v_{L+1}$. In this manner, all algorithms are run for 60 seconds per trial, and the reported results are averaged over 50 trials. To determine the optimal learning rate, we try step sizes in $2^i$, $i$ ranging between $[-4, 4]$. Ultimately, we selected $0.5$ as our choice of step size as it yielded the best results. For all the algorithms, the decay factor $\beta$ is set to 0.5 and decay constant is chosen as $c = 10^{-4}$. Similar to the tensor decomposition problem, these parameters were hand-tuned to achieve optimal performance for each algorithm.

Fig.~\ref{fig:srvf_result}(a) illustrates the qualitative alignment results. The RGD and RGD BB warping functions are smoother and align more closely with the original compared to those produced by PGD and PGD BB (Fig.~\ref{fig:srvf_result}(a), fourth column). As a result, the RGD and RGD BB aligned functions remain closer to the original sine waves (Fig.~\ref{fig:srvf_result}(a), fifth column). In contrast, the PGD and PGD BB warping functions display irregularities and roughness, leading to aligned functions that deviate from the original shapes. Specifically, there are abrupt changes at the edges in PGD, and as a result, the sine waves transform into square waves. This is likely due to the projection operation in PGD resulting in $\dot{\gamma}$ ending up with solutions with zeros, where the reconstructed $\gamma$ look much more step like. However, in the reparameterized problem, the natural smoothness of sphere results in smoother warping functions, which are critical in applications requiring accurate diffeomorphic shape registration. 

Fig.~\ref{fig:srvf_result}(b) presents the mean log registration error curves for all the approaches, where the error corresponds to the value of the SRVF registration objective function. It demonstrates that RGD and RGD BB achieve faster convergence and more effective optimization, resulting in lower error compared to PGD and PGD BB. These findings emphasize the advantages of Riemannian techniques in achieving efficient and reliable performance in aligning functional data.
\section{Conclusion}
In this work, we presented a framework for smooth reparameterization of $L$ vectors on simplicial product spaces. We showed that reparameterization preserves weak second-order KKT points and leveraged an efficient Riemannian Barzilai-Borwein algorithm to address the reparameterized problem. The Riemannian algorithm achieves excellent quantitative performance on both problems mentioned in section~\ref{results} and is robust to the choice of initial step size.
\section*{Funding}
This work was supported by the Fonds de la Recherche Scientifique -- FNRS under Grant no T.0001.23.
\section*{Competing Interests}
The authors declare that they have no competing interests.
\section*{Code Availability}
The implementation used to generate all numerical experiments in this paper is available at~\url{https://github.com/Arafat245/smooth_reparameterizations}.
\bibliographystyle{spmpsci}
\bibliography{references}

@article{kizel2017stepwise,
  title={A stepwise analytical projected gradient descent search for hyperspectral unmixing and its code vectorization},
  author={Kizel, Fadi and Shoshany, Maxim and Netanyahu, Nathan S and Even-Tzur, Gilad and Benediktsson, J{\'o}n Atli},
  journal={IEEE Transactions on Geoscience and Remote Sensing},
  volume={55},
  number={9},
  pages={4925--4943},
  year={2017},
  publisher={IEEE}
}

@inproceedings{wu2017stochastic,
  title={A stochastic maximum-likelihood framework for simplex structured matrix factorization},
  author={Wu, Ruiyuan and Ma, Wing-Kin and Fu, Xiao},
  booktitle={2017 IEEE International Conference on Acoustics, Speech and Signal Processing (ICASSP)},
  pages={2557--2561},
  year={2017},
  organization={IEEE}
}

@article{ma2013signal,
  title={A signal processing perspective on hyperspectral unmixing: Insights from remote sensing},
  author={Ma, Wing-Kin and Bioucas-Dias, Jos{\'e} M and Chan, Tsung-Han and Gillis, Nicolas and Gader, Paul and Plaza, Antonio J and Ambikapathi, ArulMurugan and Chi, Chong-Yung},
  journal={IEEE Signal Processing Magazine},
  volume={31},
  number={1},
  pages={67--81},
  year={2013},
  publisher={IEEE}
}

@article{abdolali2021simplex,
  title={Simplex-structured matrix factorization: Sparsity-based identifiability and provably correct algorithms},
  author={Abdolali, Maryam and Gillis, Nicolas},
  journal={SIAM Journal on Mathematics of Data Science},
  volume={3},
  number={2},
  pages={593--623},
  year={2021},
  publisher={SIAM}
}

@article{sidiropoulos2023minimizing,
  title={Minimizing low-rank models of high-order tensors: Hardness, span, tight relaxation, and applications},
  author={Sidiropoulos, Nicholas D and Karakasis, Paris A and Konar, Aritra},
  journal={IEEE Transactions on Signal Processing},
  year={2023},
  publisher={IEEE}
}

@article{bertsekas1997nonlinear,
  title={Nonlinear programming},
  author={Bertsekas, Dimitri P},
  journal={Journal of the Operational Research Society},
  volume={48},
  number={3},
  pages={334--334},
  year={1997},
  publisher={Taylor \& Francis}
}

@book{gill2019practical,
  title={Practical optimization},
  author={Gill, Philip E and Murray, Walter and Wright, Margaret H},
  year={2019},
  publisher={SIAM}
}

@book{bonnans2006numerical,
  title={Numerical optimization: theoretical and practical aspects},
  author={Bonnans, Joseph-Fr{\'e}d{\'e}ric and Gilbert, Jean Charles and Lemar{\'e}chal, Claude and Sagastiz{\'a}bal, Claudia A},
  year={2006},
  publisher={Springer Science \& Business Media}
}

@book{nocedal2006numerical,
  author = {Jorge Nocedal and Stephen J. Wright},
  title = {Numerical Optimization},
  edition = {2nd},
  year = {2006},
  publisher = {Springer},
}

@book{aragon2019nonlinear,
  author = {Francisco J. Aragón and others},
  title = {Nonlinear Optimization},
  year = {2019},
  publisher = {Society for Industrial and Applied Mathematics},
}

@article{faybusovich1991hamiltonian,
  title={Hamiltonian structure of dynamical systems which solve linear programming problems},
  author={Faybusovich, Leonid},
  journal={Physica D: Nonlinear Phenomena},
  volume={53},
  number={2-4},
  pages={217--232},
  year={1991},
  publisher={Elsevier}
}

@article{kargas2018tensors,
  title={Tensors, learning, and “{Kolmogorov} extension” for finite-alphabet random vectors},
  author={Kargas, Nikos and Sidiropoulos, Nicholas D and Fu, Xiao},
  journal={IEEE Transactions on Signal Processing},
  volume={66},
  number={18},
  pages={4854--4868},
  year={2018},
  publisher={IEEE}
}

@article{li2021simplex,
  title={From the simplex to the sphere: Faster constrained optimization using the {Hadamard} parametrization},
  author={Li, Qiuwei and McKenzie, Daniel and Yin, Wotao},
  journal={arXiv preprint arXiv:2112.05273},
  year={2021}
}

@inproceedings{yeredor2019estimation,
  title={Estimation of a low-rank probability-tensor from sample sub-tensors via joint factorization minimizing the {Kullback-Leibler} divergence},
  author={Yeredor, Arie and Haardt, Martin},
  booktitle={2019 27th European Signal Processing Conference (EUSIPCO)},
  pages={1--5},
  year={2019},
  organization={IEEE}
}

@article{levin2022effect,
  title={The effect of smooth parametrizations on nonconvex optimization landscapes},
  author={Levin, Eitan and Kileel, Joe and Boumal, Nicolas},
  journal={arXiv preprint arXiv:2207.03512},
  year={2022}
}

@article{esposito2025riemannian,
  title={Riemannian Multiplicative Update for Sparse Simplex constraint using oblique rotation manifold},
  author={Esposito, Flavia and Ang, Andersen},
  journal={arXiv preprint arXiv:2503.24075},
  year={2025}
}

@article{guo2021sparse,
  title={A sparse oblique-manifold nonnegative matrix factorization for hyperspectral unmixing},
  author={Guo, Ziyang and Min, Anyou and Yang, Bing and Chen, Junhong and Li, Hong and Gao, Junbin},
  journal={IEEE Transactions on Geoscience and Remote Sensing},
  volume={60},
  pages={1--13},
  year={2021},
  publisher={IEEE}
}

@article{vaskevicius2019implicit,
  title={Implicit regularization for optimal sparse recovery},
  author={Vaskevicius, Tomas and Kanade, Varun and Rebeschini, Patrick},
  journal={Advances in Neural Information Processing Systems},
  volume={32},
  year={2019}
}

@inproceedings{duchi2008efficient,
  title={Efficient projections onto the $l 1$-ball for learning in high dimensions},
  author={Duchi, John and Shalev-Shwartz, Shai and Singer, Yoram and Chandra, Tushar},
  booktitle={Proceedings of the 25th international conference on Machine learning},
  pages={272--279},
  year={2008}
}

@book{srivastava2016functional,
  title={Functional and shape data analysis},
  author={Srivastava, Anuj and Klassen, Eric P},
  volume={1},
  year={2016},
  publisher={Springer}
}

@article{tucker2013generative,
  title={Generative models for functional data using phase and amplitude separation},
  author={Tucker, J Derek and Wu, Wei and Srivastava, Anuj},
  journal={Computational Statistics \& Data Analysis},
  volume={61},
  pages={50--66},
  year={2013},
  publisher={Elsevier}
}

@article{sidiropoulos2017tensor,
  title={Tensor decomposition for signal processing and machine learning},
  author={Sidiropoulos, Nicholas D and De Lathauwer, Lieven and Fu, Xiao and Huang, Kejun and Papalexakis, Evangelos E and Faloutsos, Christos},
  journal={IEEE Transactions on signal processing},
  volume={65},
  number={13},
  pages={3551--3582},
  year={2017},
  publisher={IEEE}
}

@article{amiridi2021information,
  title={Information-theoretic feature selection via tensor decomposition and submodularity},
  author={Amiridi, Magda and Kargas, Nikos and Sidiropoulos, Nicholas D},
  journal={IEEE Transactions on Signal Processing},
  volume={69},
  pages={6195--6205},
  year={2021},
  publisher={IEEE}
}

@article{birgin2000nonmonotone,
  title={Nonmonotone spectral projected gradient methods on convex sets},
  author={Birgin, Ernesto G and Mart{\'\i}nez, Jos{\'e} Mario and Raydan, Marcos},
  journal={SIAM Journal on Optimization},
  volume={10},
  number={4},
  pages={1196--1211},
  year={2000},
  publisher={SIAM}
}

@article{iannazzo2018riemannian,
  title={The {Riemannian} {Barzilai--Borwein} method with nonmonotone line search and the matrix geometric mean computation},
  author={Iannazzo, Bruno and Porcelli, Margherita},
  journal={IMA Journal of Numerical Analysis},
  volume={38},
  number={1},
  pages={495--517},
  year={2018},
  publisher={Oxford University Press}
}

@article{zhao2019implicit,
  title={Implicit regularization via {Hadamard} product over-parametrization in high-dimensional linear regression},
  author={Zhao, Peng and Yang, Yun and He, Qiao-Chu},
  journal={arXiv preprint arXiv:1903.09367},
  volume={2},
  number={4},
  pages={8},
  year={2019}
}

@article{kumar2025shape,
  title={A shape-based functional index for objective assessment of pediatric motor function},
  author={Kumar, Shashwat and Rahman, Arafat and Gutierrez, Robert and Livermon, Sarah and McCrady, Allison N and Blemker, Silvia and Scharf, Rebecca and Srivastava, Anuj and Barnes, Laura E},
  journal={Plos one},
  volume={20},
  number={10},
  pages={e0332383},
  year={2025},
  publisher={Public Library of Science San Francisco, CA USA}
}

@BOOK{AMS2008,
 author = "P.-A. Absil and R. Mahony and R. Sepulchre",
 title = "Optimization Algorithms on Matrix Manifolds",
 publisher = "Princeton University Press",
 address = "Princeton, NJ",
 year = 2008,
 month_hide = "January",
 isbn = "978-0-691-13298-3",
 isbn-10 = "0-691-13298-4",
 url = "http://press.princeton.edu/titles/8586.html",
 }

@article{AbsMal2012,
author = {P.-A. Absil and J\'{e}r\^{o}me Malick},
title = {Projection-like Retractions on Matrix Manifolds},
publisher = {SIAM},
year = {2012},
journal = {SIAM Journal on Optimization},
volume = {22},
number = {1},
pages = {135-158},
url = {http://link.aip.org/link/?SJE/22/135/1},
doi = {10.1137/100802529}
}

@Book{boumal2023intromanifolds,
  title     = {An introduction to optimization on smooth manifolds},
  author    = {Boumal, Nicolas},
  publisher = {Cambridge University Press},
  year      = {2023},
  url       = {https://www.nicolasboumal.net/book},
  doi       = {10.1017/9781009166164},
}

@article{GuoLinYe2013,
author = "Guo, Lei and Lin, Gui-Hua and Ye, Jane J.",
title = "Second-Order Optimality Conditions for Mathematical Programs with Equilibrium Constraints",
journal = "Journal of Optimization Theory and Applications",
year = 2013,
volume = 158,
pages = "33--64",
doi = "10.1007/s10957-012-0228-x",
}

@book{niculescu2006convex,
  title={Convex functions and their applications},
  author={Niculescu, Constantin and Persson, Lars-Erik},
  volume={23},
  year={2006},
  publisher={Springer}
}

\newpage
\onecolumn
\appendix

\newpage
\section{Appendix (Results with Different Learning Rates)}
\label{appendix_b}

\begin{figure*}[h]
    \centering
    \includegraphics[width=0.99\textwidth]{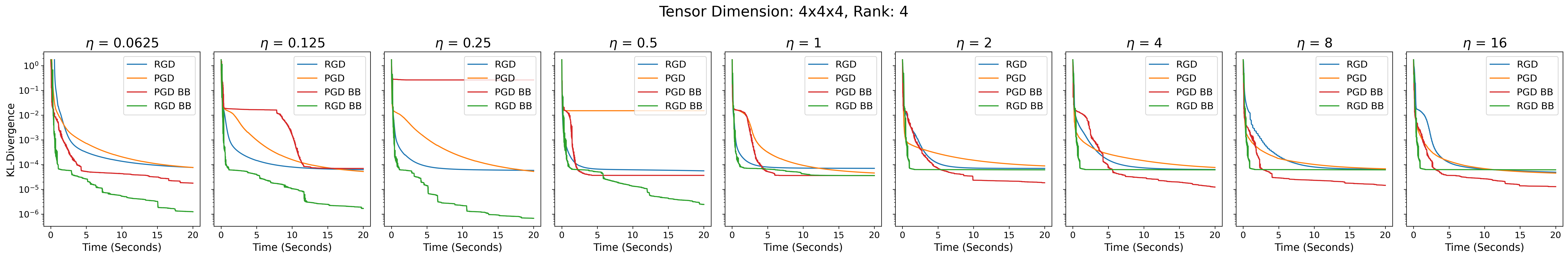}
    \includegraphics[width=0.99\textwidth]{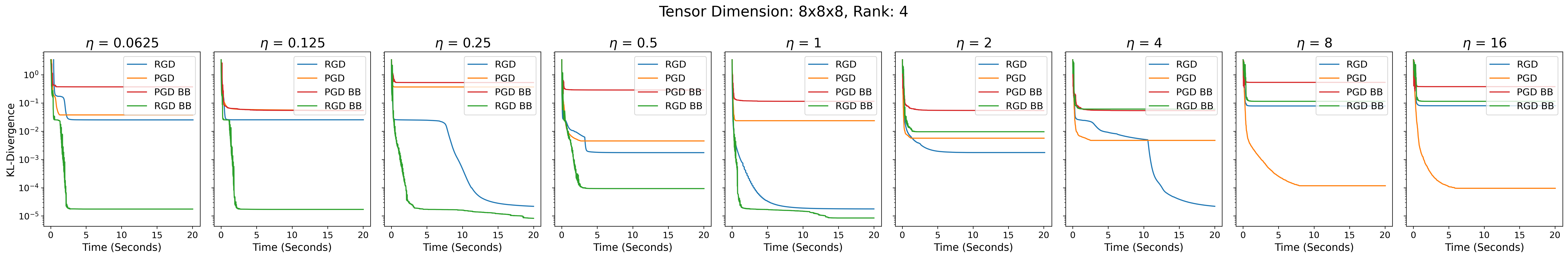}
    \includegraphics[width=0.99\textwidth]{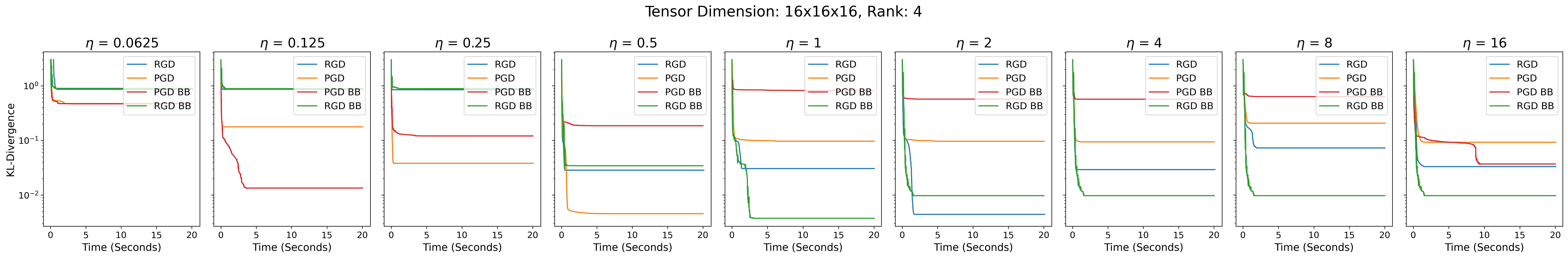}
    \includegraphics[width=0.99\textwidth]{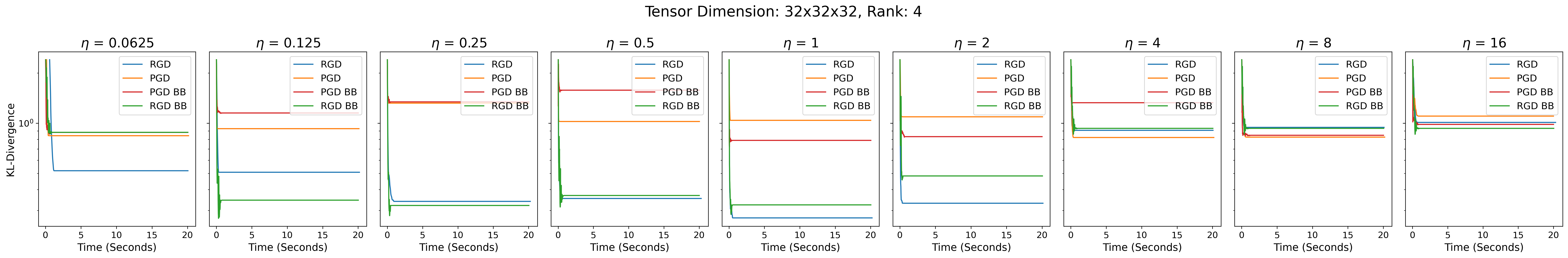}
    \caption{Results for tensor dimensions 4, 8, 16, 32 of rank 4 with different learning rates $\eta$.}
    \label{fig:tensor_rank4}
\end{figure*}

\begin{figure*}[h]
    \centering
    \includegraphics[width=0.99\textwidth]{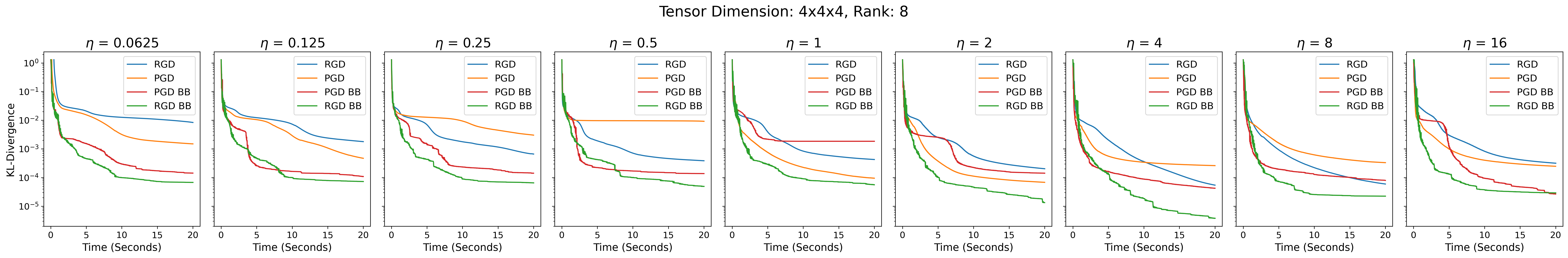}
    \includegraphics[width=0.99\textwidth]{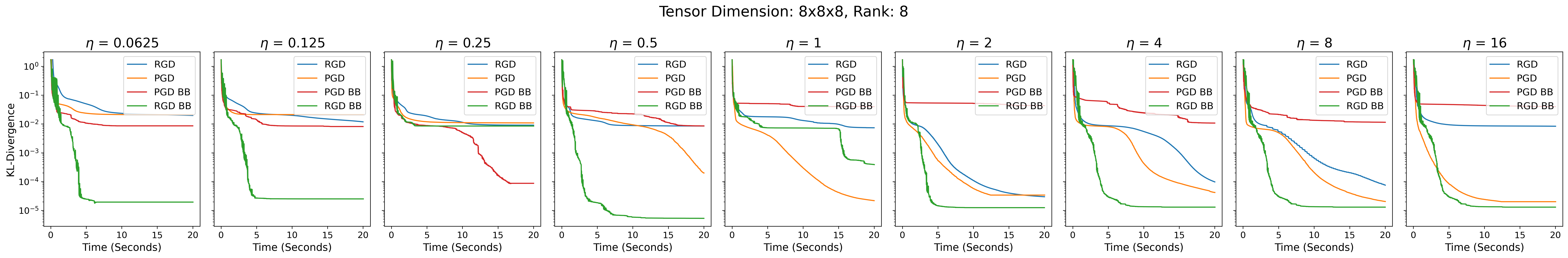}
    \includegraphics[width=0.99\textwidth]{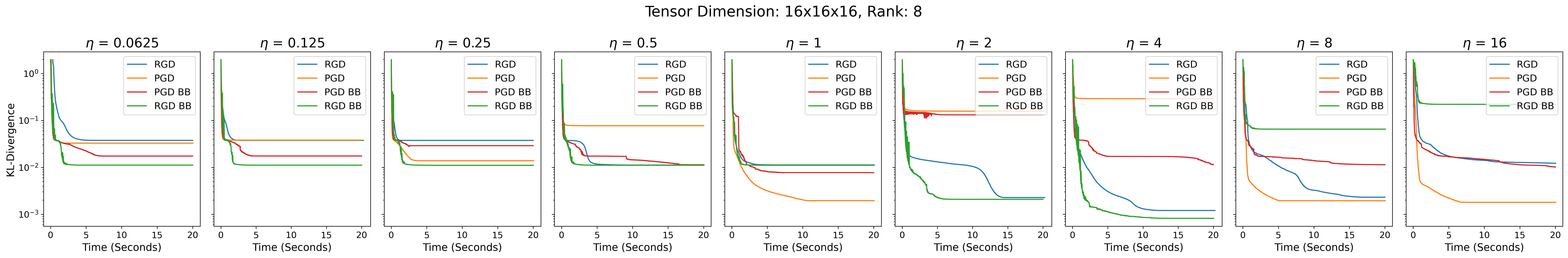}
    \includegraphics[width=0.99\textwidth]{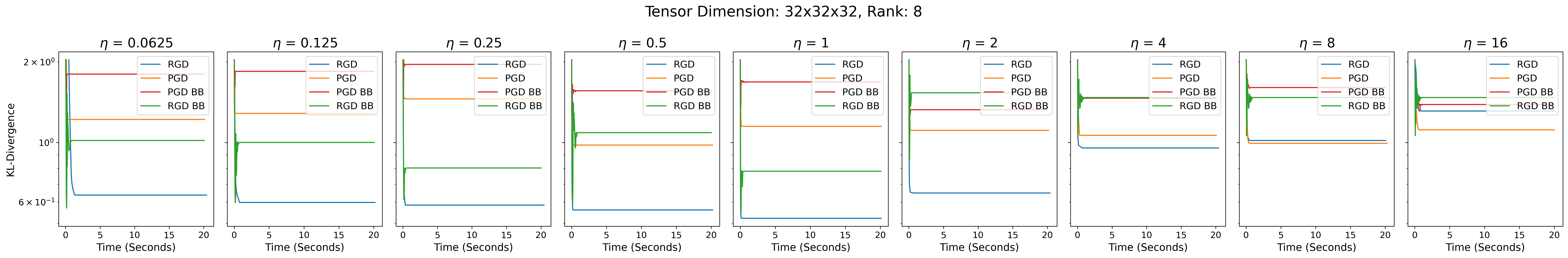}
    \caption{Results for tensor dimensions 4, 8, 16, 32 of rank 8 with different learning rates $\eta$.}
    \label{fig:tensor_rank8}
    \end{figure*}

    \begin{figure*}[h]
    \centering
    \includegraphics[width=0.99\textwidth]{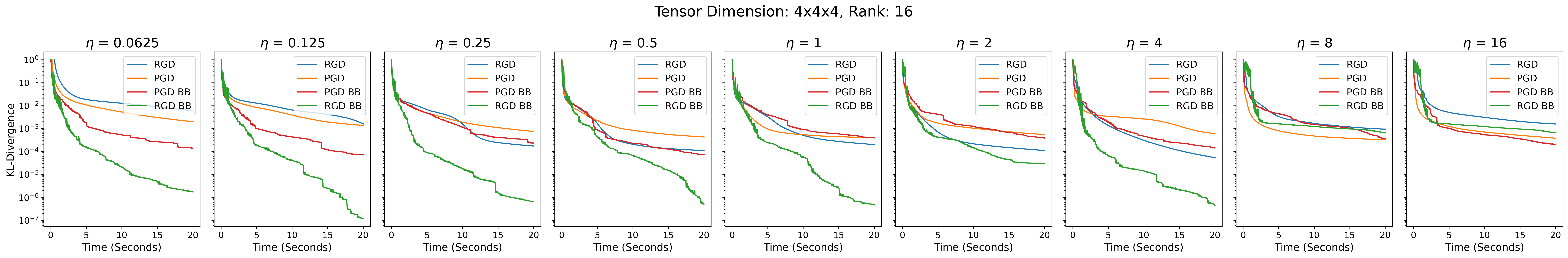}
    \includegraphics[width=0.99\textwidth]{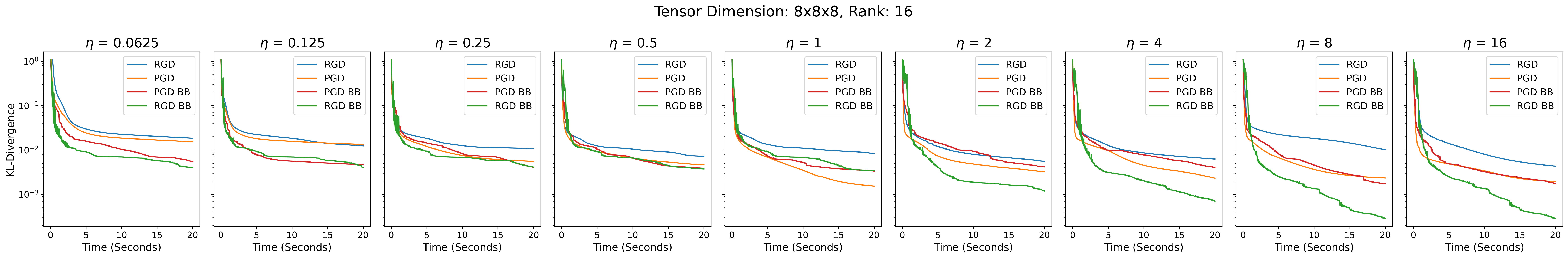}
    \includegraphics[width=0.99\textwidth]{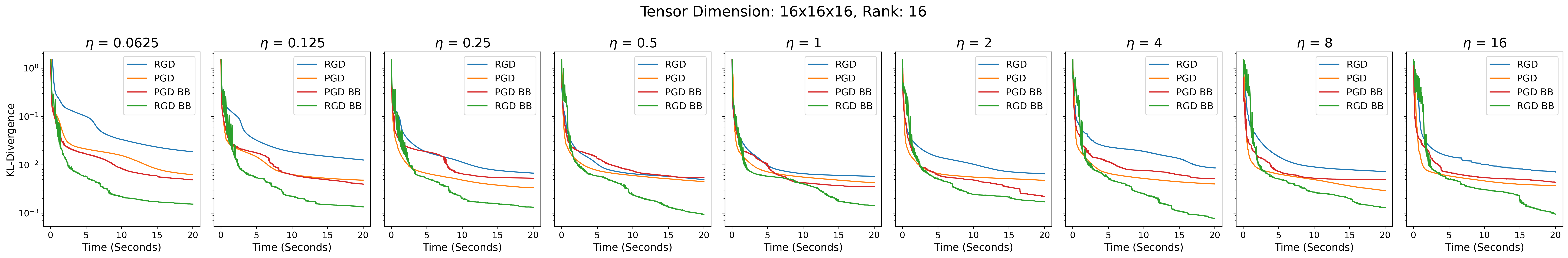}
    \includegraphics[width=0.99\textwidth]{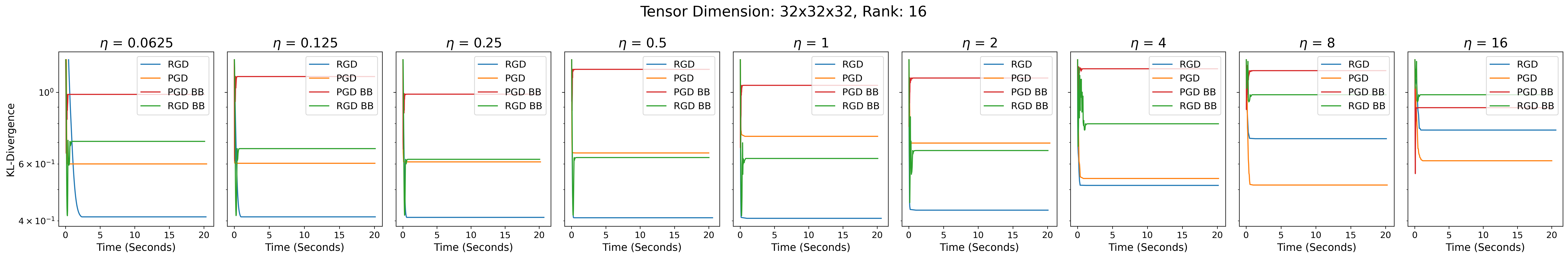}
    \caption{Results for tensor dimensions 4, 8, 16, 32 of rank 16 with different learning rates $\eta$.}
    \label{fig:tensor_rank16}
    \end{figure*}

\end{document}